\documentclass[11pt]{article}

\usepackage[preprint]{acl}
\usepackage{fix-cm}
\usepackage{times}
\usepackage{latexsym}
\usepackage[T1]{fontenc}

\usepackage[utf8]{inputenc}
\usepackage[T1]{fontenc}    
\usepackage{hyperref}       
\usepackage{url}            
\usepackage{booktabs}       
\usepackage{amsfonts}       
\usepackage{nicefrac}       
\usepackage{microtype}      
\usepackage{xcolor}         
\definecolor{overlay}{HTML}{FF7F0E} 
\usepackage{siunitx}
\usepackage{array}
\usepackage{arydshln}
\usepackage{graphicx}
\usepackage{hyperref}
\usepackage{dsfont}      
\usepackage{bbm}         
\usepackage{multirow}
\usepackage{comment}
\usepackage{microtype}

\usepackage{xurl}
\usepackage{hyperref}

\usepackage{amsmath}
\usepackage{amssymb}
\usepackage{mathtools}
\usepackage{amsthm}
\newtheorem{theorem}{Theorem}
\newtheorem{lemma}{Lemma}
\newtheorem{proposition}{Proposition}
\newtheorem{corollary}{Corollary}
\theoremstyle{definition}
\newtheorem{definition}{Definition}
\usepackage{enumitem}
\usepackage{subcaption}
\usepackage{colortbl}
\usepackage{pgfplots}
\pgfplotsset{compat=1.18}
\usepackage[capitalize,noabbrev]{cleveref}
\usepackage[textsize=tiny]{todonotes}
\usepackage[most]{tcolorbox}
\usepackage[edges]{forest}
\usepackage{makecell}
\usepackage{adjustbox}
\usepackage{tabularx}

\usepackage{times}
\usepackage{xcolor}
\usetikzlibrary{arrows.meta,calc,positioning,shapes.geometric}

\definecolor{ink}{HTML}{263238}
\definecolor{muted}{HTML}{5F6B73}
\definecolor{linegray}{HTML}{B8C0C5}
\definecolor{papergray}{HTML}{F4F6F7}
\definecolor{targetblue}{HTML}{2F6690}
\definecolor{targetlight}{HTML}{EAF2F7}
\definecolor{draftorange}{HTML}{C87524}
\definecolor{draftlight}{HTML}{FBF0E4}

\newcommand{\exblockh}{5.6mm}

\usepackage{microtype}
\usepackage[normalem]{ulem}
\usepackage{xcolor}
\usepackage{inconsolata}

\usepackage{graphicx}
\usepackage{colortbl}

\usepackage{tikz}
\usetikzlibrary{positioning,fit,backgrounds,calc}
\tikzset{
  >=Latex,
  flow/.style={-{Latex[length=2.0mm,width=1.25mm]}, draw=ink,
               line width=0.55pt, rounded corners=2pt},
  branch/.style={flow, draw=muted},
  inputbox/.style={draw=linegray, fill=papergray, rounded corners=2pt,
                   line width=0.55pt, minimum height=7mm, align=center,
                   inner xsep=4mm},
  rootbox/.style={draw=ink, fill=ink, text=white, rounded corners=2.5pt,
                  line width=0.7pt, minimum width=35mm,
                  minimum height=8mm, align=center},
  notation/.style={draw=linegray, fill=white, rounded corners=2pt,
                   line width=0.5pt, minimum height=7mm, align=center,
                   inner xsep=3mm},
  collabcard/.style={draw=targetblue, fill=targetlight!48!white,
                     rounded corners=2.5pt, line width=0.65pt},
  trunccard/.style={draw=draftorange, fill=draftlight!55!white,
                    rounded corners=2.5pt, line width=0.65pt},
  tag/.style={draw=linegray, fill=white, rounded corners=1.5pt,
              line width=0.45pt, minimum height=5.6mm, align=center,
              inner xsep=1.5mm},
  smallpill/.style={draw=linegray, fill=white, rounded corners=2pt,
                    line width=0.45pt, minimum height=5.5mm, align=center,
                    inner xsep=1.5mm},
}
\definecolor{plotblue}{HTML}{4C78A8}
\definecolor{plotpink}{HTML}{D879A2}
\definecolor{plotorange}{HTML}{F2A23A}
\definecolor{plotpurple}{HTML}{7A3E9D}
\definecolor{plotgray}{HTML}{666666}

\PassOptionsToPackage{table}{xcolor}
\definecolor{mixrow}{RGB}{234,242,252}   
\definecolor{truncrow}{RGB}{252,242,232} 

\title{Revisiting Lossy Verification in Speculative Decoding: Mechanisms, Trade-offs, and Failure Modes}

\author{%
  \textbf{Tianyu Wang$^{1}$ \quad
  Yuxuan Zhou$^{2}$\thanks{Corresponding author.}\thanks{Project lead.} \quad
     Heng Li$^{1}$} \\
  \textbf{Wenbin Wang$^{1}$ \quad
Zikai Xiao$^{3}$ \quad
Chunrui Zheng$^{2,4}$ \quad
  Junyuan Shang$^{2}$} \\[6pt]
  $^{1}$Independent Researcher \quad
  $^{2}$Baidu Inc.  \quad
  $^{3}$Zhejiang University \quad
  $^{4}$Shandong University \quad
}

\begin{document}
\maketitle
\begin{abstract}
Speculative Decoding (SD) accelerates large language model inference by allowing a lightweight draft model to propose tokens that are subsequently verified in parallel by a larger target model. Recent approaches introduce lossy verification schemes to further improve efficiency by relaxing strict distributional matching. Yet such relaxation silently rewrites the decoding distribution, and the resulting acceleration can come at the cost of unstable, sometimes severely degraded generation quality. In this work, we present a principled analysis of the distributions induced by lossy verification methods. We show that many seemingly distinct approaches differ only superficially and can be unified into two categories: truncation-based verification and collaborative verification. We further construct a diagnostic evaluation framework across curated benchmarks. For truncation-based methods, we identify a fundamental pitfall—performance can degrade significantly compared to the true truncation sampling baseline due to distributional distortion. For collaborative verification, we reveal that well-designed relaxation principles, namely overshoot suppression and supervision quality, matter far more than the linear interpolation between draft and target. Our code is available at \url{https://github.com/ZhouYuxuanYX/Fast-HSD}.
\end{abstract}



\section{Introduction}
\label{intro}
Auto-regressive Large Language Models (LLMs) \cite{achiam2023gpt, touvron2023llama, bai2023qwen} are at the forefront of the AI revolution. Despite their strong performance, their non-parallelizable inference presents a significant efficiency bottleneck, particularly for long-context generation in test-time scaling \cite{openai2024o1, guo2025deepseek}, LLM agents \cite{yao2022react, schick2023toolformer, autogpt}, and multimodal reasoning~\cite{peng2025lmmr1empowering3blmms}. Speculative Decoding (SD) \cite{leviathan2023fast} mitigates this challenge by having a lightweight draft model generate candidate tokens, which are then verified by the target model in parallel, reducing expensive forward passes while ensuring the generated distribution matches that of the target model.


\begin{figure}[t]
\setlength{\abovecaptionskip}{3pt} 
\setlength{\belowcaptionskip}{3pt} 
    \centering
    \includegraphics[width=0.9\linewidth]{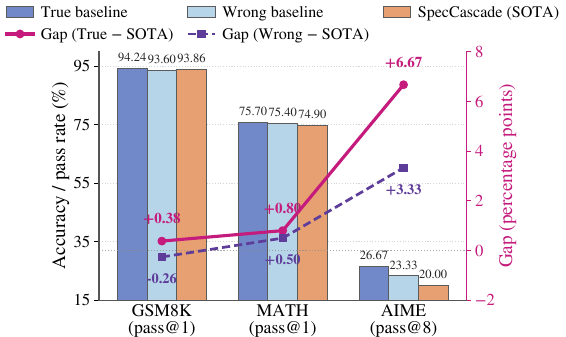}
\caption{Accuracy gap between the lossless baseline and truncation-based verification widens sharply with task difficulty, increasing from +0.38 pp on GSM8K to +6.67 pp on AIME. Here, the \textbf{True baseline} means the target model with min-p sampling, while the \textbf{Wrong baseline} indicates the target model w/o min-p sampling.}
    \label{fig:difficulty_trend}
\vspace{-15pt}
\end{figure}

While theoretical advances have pushed the limits of SD \cite{sun2024block, zhou2026overcomingjointintractabilitylossless}, the strict requirements for distribution matching continue to limit the potential for further speedup. Lossy verification methods \cite{cai2024medusa, narasimhan2024faster, zhou2024distillspec, fu2025fastlargelanguagemodel, leviathan2023fast} relax these requirements for greater acceleration, but existing work overstates their advantages under curated settings (selective hyperparameters or easy benchmarks). This obscures the true speed–quality trade-offs arising from distortion of the target distribution and leaves no clear comparison across methods, hampering the broader adoption of these methods and further advancements in the field.

In this work, we present a principled analysis of the distributions induced by lossy verification methods, resulting in a precise categorization of them. This analysis reveals that the apparent differences between methods are largely superficial, with most falling into two distinct categories. 


First, methods such as typical acceptance in Medusa~\cite{cai2024medusa} and SpecCascade~\cite{narasimhan2024faster} accept draft tokens as long as they fall within the allowed set defined by truncation-based sampling methods, specifically $\eta$-sampling~\cite{hewitt2022truncation} and min-$p$ sampling~\cite{nguyen2025turningheatminpsampling}. We refer to these as \emph{truncation-based verification} methods. Second, we show that interpolation-based, lenience-based, and judge-based relaxations essentially use weighted linear combinations of the target and draft distributions to relax the standard acceptance rule, as summarized in~\Cref{fig:taxonomy}. Following the terminology of CoS~\cite{fu2025fastlargelanguagemodel}, we group these as \emph{collaborative verification} methods.

We first examine truncation-based verification methods against their corresponding baselines, i.e., directly using the same truncation sampling strategies for the target model. Our experiments show that the seemingly comparable performance reported in prior work~\cite{cai2024medusa, narasimhan2024faster} largely arises from improved baseline performance induced by their adopted truncation sampling methods. Moreover, the performance gap between truncation-based verification and the true baseline grows with task difficulty, as shown in \Cref{fig:difficulty_trend} for SpecCascade. As harder tasks better reflect real-world scenarios, this widening gap highlights potential pitfalls of truncation-based verification in practice. This pitfall is further amplified when lossy verification methods are integrated into the EAGLE-3 speculative decoding system~\cite{li2025eagle}, where such a performance gap becomes even more pronounced.

We next turn to collaborative verification, where we identify a key principle for a better speed–quality trade-off: existing methods interpolate the draft and target distributions either uniformly in interpolation-based relaxation, adaptively in lenience-based relaxation, or through learned, context-conditioned decisions in judge-based relaxation. We find that interpolation is not what preserves generation quality: selectively suppressing the draft at overshoot tokens or a well-trained judge is sufficient.

In summary, we make these contributions:


\begin{itemize}[noitemsep, topsep=0pt, parsep=0pt, partopsep=0pt, leftmargin=*]
    \item \textbf{Unified Framework}: Unifying seemingly different lossy verification methods into two categories: truncation-based verification and collaborative verification, by reviewing their common underlying mechanisms.
    \item \textbf{Empirical Pitfalls}: Identifying a key pitfall in truncation-based verification: distributional distortion can significantly degrade performance relative to the true truncation sampling baseline.
\item \textbf{Governing Principle}:
In collaborative verification, instead of simple linear interpolation between the draft and
target probability, overshoot suppression or a well-trained judge as a surrogate acceptance rule can
significantly improve the efficiency-performance trade-off.
\end{itemize}

\definecolor{ink}{HTML}{14181C}
\definecolor{classA}{HTML}{1F4E9C}      
\definecolor{classB}{HTML}{C2600A}      
\definecolor{targetblue}{HTML}{4C78A8}  
\definecolor{draftpink}{HTML}{D65F8F}   
\definecolor{overlapgold}{HTML}{F2A541} 
\definecolor{resultink}{HTML}{14181C}   

\definecolor{condgray}{HTML}{5A6570}    
\definecolor{difflen}{HTML}{D42A2A}     
\definecolor{diffjud}{HTML}{EFB700}     
\definecolor{bandgray}{HTML}{6B7A8F}    

\DeclareRobustCommand{\cond}[1]{%
  \mbox{\hspace{1.1mm}\fontsize{5.4}{6.2}\selectfont
        \textcolor{condgray}{\textit{if}\ $#1$}}}
\DeclareRobustCommand{\note}[1]{%
  \mbox{\hspace{1.1mm}\fontsize{5.4}{6.2}\selectfont\textcolor{condgray}{$#1$}}}
\DeclareRobustCommand{\othw}{%
  \mbox{\hspace{1.1mm}\fontsize{5.4}{6.2}\selectfont
        \textcolor{condgray}{\textit{otherwise}}}}

\DeclareRobustCommand{\panelbox}[2][1.0]{%
  \tikz[baseline=-0.55ex]\fill[#2,opacity=#1,rounded corners=0.2mm]
    (0,0) rectangle (2.4mm,2.4mm);}
\DeclareRobustCommand{\panelline}{%
  \tikz[baseline=-0.55ex]{\draw[resultink,line width=0.9pt](0,0.9mm)--(4.4mm,0.9mm);
    \fill[resultink](2.2mm,0.9mm)circle(0.55mm);}}
\DeclareRobustCommand{\panelcap}{%
  \tikz[baseline=-0.55ex]\draw[resultink,line width=0.9pt](0,0.9mm)--(3.0mm,0.9mm);}
\DeclareRobustCommand{\panelband}{%
  \tikz[baseline=-0.55ex]{\fill[bandgray,opacity=0.22](0,0)rectangle(2.4mm,2.4mm);
    \draw[bandgray,opacity=0.55,line width=0.2pt](0,0)rectangle(2.4mm,2.4mm);}}
\DeclareRobustCommand{\panelmark}[1]{%
  \tikz[baseline=-0.55ex]\filldraw[fill=#1,draw=resultink,line width=0.25pt]
    (0,0.8mm) circle (1.05mm);}

\makeatletter
\newcommand{\deflen}[2]{%
  \@ifundefined{#1}{\expandafter\newlength\csname #1\endcsname}{}%
  \expandafter\setlength\csname #1\endcsname{#2}}
\makeatother
 
\deflen{cardw}{40.50mm}     
\deflen{gapin}{0.50mm}      
\deflen{gapgr}{1.00mm}      
\deflen{headaw}{122.50mm}   
\deflen{headh}{4.65mm}      
\deflen{headgap}{1.00mm}    
\deflen{cardpad}{0.40mm}    
\deflen{gtitle}{0.00mm}     
\deflen{geqn}{1.60mm}       

\newcommand{\caserow}[2]{%
  \makebox[\cardw-3.2mm][l]{$\displaystyle #1$\hfill$#2$}}

\begin{figure*}[t]
\centering
\begin{tikzpicture}[
  x=1cm, y=1cm,
  every node/.style={text=ink},
  cardtitle/.style={
    font=\fontsize{9.4}{10.4}\selectfont\bfseries,
    align=center, text width=\cardw, inner sep=0pt, outer sep=0pt},
  titleA/.style={cardtitle, text=classA},
  titleB/.style={cardtitle, text=classB},
  plotnode/.style={
    inner sep=0pt, outer sep=0pt,
    minimum width=\cardw, minimum height=23.00mm},
  eqnode/.style={
    font=\fontsize{6.8}{7.8}\selectfont,
    align=center, text width=\cardw, inner sep=0pt, outer sep=0pt},
  exnode/.style={
    font=\fontsize{6.4}{7.6}\selectfont,
    align=flush center, text width=\cardw-2mm,
    inner sep=0pt, outer sep=0pt,
    execute at begin node={\setlength{\parindent}{0pt}\setlength{\parskip}{0pt}}},,
  exA/.style={exnode, text=classA!85!black},
  exB/.style={exnode, text=classB!85!black},
  cardA/.style={
    draw=classA, fill=classA!8, rounded corners=5pt, line width=0.60pt,
    minimum width=\cardw, minimum height=45.00mm, 
    inner sep=0pt, outer sep=0pt, anchor=north west},
  cardB/.style={cardA, draw=classB, fill=classB!8},
  header/.style={
    draw=classA, fill=classA, text=white, rounded corners=5pt,
    line width=0.60pt, minimum height=\headh, anchor=south,
    font=\fontsize{8.5}{9.4}\selectfont\bfseries},
  headerB/.style={header, draw=classB, fill=classB,
    font=\fontsize{7.65}{8.4}\selectfont\bfseries}
]

\node[cardA] (card1) at (0,0) {};
\node[cardA] (card2) at ($(card1.north east)+(\gapin,0)$) {};
\node[cardA] (card3) at ($(card2.north east)+(\gapin,0)$) {};
\node[cardB] (card4) at ($(card3.north east)+(\gapgr,0)$) {};

\coordinate (exanchor) at ($(card1.south)+(0,\exblockh)$);

\node[header,  minimum width=\headaw]
  at ($(card1.north west)!0.5!(card3.north east)+(0,\headgap)$)
  {Collaborative verification};

\node[headerB, minimum width=\cardw]
  at ($(card4.north)+(0,\headgap)$)
  {Truncation-based verification};

\node[titleA, anchor=north] (title1)
  at ($(card1.north)+(0,-\cardpad)$) {Interpolation-based};

\node[plotnode, anchor=north] (plot1)
  at ($(title1.south)+(0,-\gtitle)$)
  {\includegraphics[width=\cardw, height=23mm]{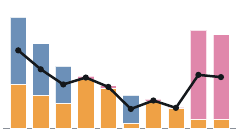}};

\node[eqnode, anchor=north] (eq1) at ($(plot1.south)+(0,-\geqn)$)
  {$\lambda p(x)+(1-\lambda)q(x)\ \note{\lambda\in[0,1]}$};

\node[exA, anchor=north] (ex1) at (card1.south|-exanchor)
  {CoS~\citep{fu2025fastlargelanguagemodel}\\%
   DIVERSED~\citep{wang2026diversed}};
\draw[classA!35, line width=0.4pt]
  ($(card1.west|-exanchor)+(2mm,1.4mm)$) -- ($(card1.east|-exanchor)+(-2mm,1.4mm)$);

\node[titleA, anchor=north] (title2)
  at ($(card2.north)+(0,-\cardpad)$) {Lenience-based};

\node[plotnode, anchor=north] (plot2)
  at ($(title2.south)+(0,-\gtitle)$)
  {\includegraphics[width=\cardw, height=23mm]{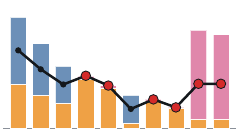}};


\node[eqnode, anchor=north] (eq2) at ($(plot2.south)+(0,-\geqn)$)
  {\adjustbox{max width=\cardw-2mm}{$\left\{\begin{array}{@{}l@{}}
      \Delta_\ell p(x)+(1-\Delta_\ell)q(x) \cond{q\le p}\\[1.5pt]
      q(x) \,\,\,\,\,\,\,\,\,\,\,\,\,\,\,\,\,\,\,\,\,\,\,\,\,\,\,\, \,\,\, \cond{p<q\le p/\ell}\\[1.5pt]
      p(x)/\ell \,\,\,\,\,\,\,\,\,\,\,\,\,\,\,\,\,\,\,\,\,\,\,\,\,\,\,\,\,\,\,\, \,\,\, \cond{q>p/\ell}
    \end{array}\right.$}};

\node[exA, anchor=north] (ex2) at (card2.south|-exanchor)
  {Lenience~\citep{leviathan2023fast}\\%
   DistillSpec~\citep{zhou2024distillspec}};
\draw[classA!35, line width=0.4pt]
  ($(card2.west|-exanchor)+(2mm,1.4mm)$) -- ($(card2.east|-exanchor)+(-2mm,1.4mm)$);

\node[titleA, anchor=north] (title3)
  at ($(card3.north)+(0,-\cardpad)$) {Judge-based};

\node[plotnode, anchor=north] (plot3)
  at ($(title3.south)+(0,-\gtitle)$)
  {\includegraphics[width=\cardw, height=23mm]{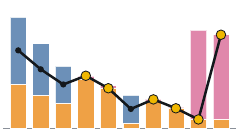}};

\node[eqnode, anchor=north] (eq3) at ($(plot3.south)+(0,-\geqn)$)
  {\adjustbox{max width=\cardw-2mm}{$\left\{\begin{array}{@{}l@{\,}l@{}}
\Delta_j p(x) + \bigl(1-\Delta_j\bigr)q(x),
    \,\, q(x)\le p(x),\\[3pt]
p(x),
    \,\,\,\,\,\,\,\,\,\,\,\,\,\,\, q(x)>p(x)\ \text{and}\ j_{\phi}(x)=0,\\[3pt]
q(x), \,\,\,\,\,\,\,\,\,\,\,\,\,\,\,
     q(x)>p(x)\ \text{and}\ j_{\phi}(x)=1.
\end{array}\right.$}};

\node[exA, anchor=north] (ex3) at (card3.south|-exanchor)
  {Judge Decoding~\citep{bachmann2025judge}\\%
   AutoJudge~\citep{garipov2025autojudge}};
\draw[classA!35, line width=0.4pt]
  ($(card3.west|-exanchor)+(2mm,1.4mm)$) -- ($(card3.east|-exanchor)+(-2mm,1.4mm)$);

\node[titleB, anchor=north] (title4)
  at ($(card4.north)+(0,-\cardpad)$) {Truncation-based};

\node[plotnode, anchor=north] (plot4)
  at ($(title4.south)+(0,-\gtitle)$)
  {\includegraphics[width=\cardw, height=23mm]{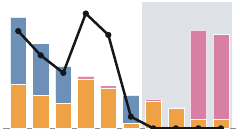}};

\node[eqnode, anchor=north] (eq4) at ($(plot4.south)+(0,-\geqn)$)
  {$\left\{\begin{array}{@{}l@{}}
      q(x)\big/\!\sum\nolimits_{v\in\mathcal{A}_{\Theta}}q(v) \cond{x\in\mathcal{A}_{\Theta}}\\[1.5pt]
      0 \,\,\,\,\,\,\,\,\,\,\,\,\,\,\,\,\,\,\,\,\,\,\,\,\,\,\,\,\,\,\,\,\,\,\,\,\,\,\,\, \cond{x\notin\mathcal{A}_{\Theta}}
    \end{array}\right.$};

\node[exB, anchor=north] (ex4) at (card4.south|-exanchor)
  {SpecCascade~\citep{narasimhan2024faster}\\%
   Typical Acceptance~\citep{cai2024medusa}};
\draw[classB!35, line width=0.4pt]
  ($(card4.west|-exanchor)+(2mm,1.4mm)$) -- ($(card4.east|-exanchor)+(-2mm,1.4mm)$);

\end{tikzpicture}

\caption{
Mechanism-level unification of lossy verification.
Each panel plots the same target/draft pair over ten tokens and the
distribution $r$ the method actually samples from.
Legend:
\panelbox[0.82]{targetblue}~target $p(x)$,
\panelbox[0.75]{draftpink}~draft $q(x)$,
\panelbox[0.95]{overlapgold}~overlap $\min(p,q)$,
\panelline~resulting distribution, and
\panelband~tokens outside the allowed set $\mathcal{A}_{\Theta}$.
Markers flag the tokens where a method departs from plain interpolation:
\panelmark{difflen}~lenience and \panelmark{diffjud}~judge.
See~\Cref{sec:detailed_derivation} for detailed derivations.
}
\label{fig:taxonomy}
\end{figure*}

\section{Preliminaries}
\label{rel work}

We first introduce the speculative decoding framework and its lossless
verification mechanism (\Cref{sec:sd_background}), which forms the foundation
for many lossy methods. We then review two decoding techniques:
collaborative decoding (\Cref{sec:collaborative_decoding}) and truncation
sampling (\Cref{sec:truncation_sampling}). As shown in~\Cref{sec:unifying}, these techniques underlie the two categories of lossy verification.

\subsection{Speculative Decoding}
\label{sec:sd_background}
Let $p$ and $q$ denote the next-token distributions of the target and draft models, respectively, over a shared vocabulary $\mathcal{V}$. In Speculative decoding~\citep{leviathan2023fast}, each draft token $x$ sampled from $q$ is accepted with probability:
\vspace{-3pt}
\begin{equation}
  \label{eq:lossless_acc}
  h(x) = \min\!\left(1,\;\frac{p(x)}{q(x)}\right).
\end{equation}
If the draft token $x$ is rejected, a replacement token is resampled from the residual distribution
$\bigl(p(x) - \min\{p(x),\,q(x)\}\bigr)_{+}$, renormalized over $\mathcal{V}$, ensuring the generated distribution equals the target $p$ exactly.
Follow-up work increases acceptance rates by proposing multiple candidates or
tree-structured drafts~\citep{sun2023spectr, miao2024specinfer, yang2024multi, li2025eagle, fan2026flatter},
while block-level verification extends lossless guarantees to sequences of
tokens~\citep{sun2024block, zhou2026overcomingjointintractabilitylossless}.
As lossless verification approaches its theoretical limits, lossy approaches achieve
further speedup by modifying the acceptance criterion: some relax the reference
distribution in Eq.~\eqref{eq:lossless_acc}~\citep{leviathan2023fast, zhou2024distillspec,
fu2025fastlargelanguagemodel}, while others gate acceptance via a set-membership criterion
derived from truncation sampling~\citep{cai2024medusa, narasimhan2024faster}.
Despite these advances, the underlying mechanisms of lossy methods and their
speed–quality trade-offs remain poorly understood, which we address in~\Cref{sec:unifying}.

\subsection{Collaborative Decoding}
\label{sec:collaborative_decoding}

Collaborative decoding combines the predictions of multiple models to form a reshaped
next-token distribution. Using the target distribution $p$ and draft distribution $q$ as defined in \Cref{sec:sd_background}, and letting $\lambda\in[0,1]$ denote an interpolation coefficient, \textbf{Weighted Ensembling (WE)}~\cite{huang2024ensemble, yao2024determine} computes the next-token probability as a convex mixture:
\begin{equation}
\label{eq:we_dist}
  p_{\mathrm{WE}}(x) = \lambda\,p(x) + (1-\lambda)\,q(x).
\end{equation}

\textbf{Contrastive Decoding (CD)}~\cite{li2023contrastive, o2023contrastive}
instead reweights $p$ using a contrastive factor derived from $q$:
\begin{equation}
  \label{eq:cd_dist}
  p_{\mathrm{CD}}(x) \;=\;
  \frac{p(x)/q(x)^{\lambda}}{\sum_{v\in\mathcal{V}} p(v)/q(v)^{\lambda}}.
\end{equation}

Whereas standard speculative decoding aims to match the target distribution,  \textbf{Collaborative Decoding via Speculation (CoS)} \cite{fu2025fastlargelanguagemodel} instead uses one of these combined distributions as the verification target, yielding additional speedup.

\subsection{Truncation Sampling}
\label{sec:truncation_sampling}

Truncation sampling restricts the vocabulary at each decoding step to an
allowed set $\mathcal{A}_\Theta \subseteq \mathcal{V}$ determined by a
truncation strategy $\Theta$, discarding low-probability tokens to reduce the
risk of incoherent outputs. Let
$Z_\Theta(p)=\sum_{v\in\mathcal{A}_\Theta}p(v)$ denote normalization
constant. Resulting distribution is
\begin{equation}
p_\Theta(x)
=
\begin{cases}
\dfrac{p(x)}{Z_\Theta(p)}, & x \in \mathcal{A}_\Theta, \\[2pt]
0,                         & x \notin \mathcal{A}_\Theta.
\end{cases}
\label{eq:truncation-sampling}
\end{equation}

Two representative strategies, \textbf{min‑p sampling}~\cite{nguyen2025turningheatminpsampling} and \textbf{$\eta$‑sampling}~\cite{hewitt2022truncation}, are described below:

\textbf{Min-p sampling} adapts to the top candidate's confidence by applying
a dynamic cutoff relative to the peak probability, where
$p_{\text{base}} \in (0,1)$:

\begin{equation}
\mathcal{A}_{\text{min-p}}
=
\bigl\{
x \in \mathcal V :
p(x)
\ge
p_{\text{base}}
\cdot
\max_{v \in \mathcal V} p(v)
\bigr\}
\label{eq:min-p}
\end{equation}



\textbf{$\eta$-sampling} adapts to model uncertainty by removing tokens below
a threshold derived from the distribution entropy
$H(p)=-\sum_{v\in\mathcal V}p(v)\log p(v)$, where
$\varepsilon,\delta>0$:
\begin{equation}
\mathcal{A}_{\eta}
=
\bigl\{
x \in \mathcal V :
p(x)
\ge
\min\!\left(
\varepsilon,\,
\delta e^{-H(p)}
\right)
\bigr\}
\label{eq:eta}
\end{equation}

In~\Cref{sec:unifying}, we show that state-of-the-art lossy verification methods~\cite{narasimhan2024faster,cai2024medusa} essentially accept a draft token as long as it falls within the allowed set defined by one of these truncation strategies.


\section{Unifying Lossy Verification}
\label{sec:unifying}

In this section, we show that existing lossy verification methods fall into two paradigms: \textbf{collaborative verification}, which relaxes verification by matching the combined distribution from collaborative decoding, and \textbf{truncation-based verification}, which accepts draft tokens based on the allowed sets induced by truncation sampling. 


\subsection{Collaborative Verification}
\label{sec: col vrf}
Collaborative verification relaxes the acceptance criterion so that the generated distribution becomes a combination of the draft and target distributions, improving decoding speed at the cost of degraded performance. This view covers both (i) methods that explicitly construct the combined distribution~\cite{fu2025fastlargelanguagemodel}, (ii) methods whose acceptance rule implicitly induces such a distribution ~\cite{leviathan2023fast}, (iii) methods whose learned verifiers selectively retain otherwise rejected draft tokens~\cite{bachmann2025judge}.

\textbf{Interpolation-based relaxation}
modifies the acceptance probability of the lossless verification to: 
\begin{equation}
h(x)\;=\;\min\!\left(1,\frac{\lambda\,p(x) + (1-\lambda)\,q(x)}{q(x)}\right).
\end{equation}

Under this rule, the resulting token distribution becomes a convex
mixture of the target and draft distributions (see \cref{fig:taxonomy}): 
\begin{equation}
P(\text{generate}~x)=\lambda\,p(x) + (1-\lambda)\,q(x) \label{eq:we_yield}.    
\end{equation}

We show that CoS~\cite{fu2025fastlargelanguagemodel},
DIVERSED~\cite{wang2026diversed}, and
Fuzzy SD~\cite{holsman2025fuzzy} share the same
interpolation-based relaxation form and differ primarily in how the
mixture coefficient is determined (see~\Cref{sec:diversed_derivation,sec:fuzzy_derivation} for details).

\textbf{Lenience-based relaxation} modifies the acceptance rule with a lenience factor $\ell\in(0,1]$:
\begin{equation}
\label{eq:lenience_acc}
h(x)\;=\;\min\!\left(1,\frac{p(x)}{\ell\,q(x)}\right).
\end{equation}

The induced distribution admits the following decomposition (see \Cref{sec:lenience_derivation} for the proof):
\begin{equation}
\begin{gathered}
P(\mathrm{generate}\,x) =\\
\left\{\begin{array}{@{}l@{\,}l@{}}
\Delta_\ell p(x) + (1-\Delta_\ell) q(x), \,\, q(x)\le p(x),\\[3pt]
q(x), \,\,\,\,\,\,\,\,\,\,\,\,\,\,\,\,\,\,\,\,\,\,\,\,\, p(x) < q(x) \le p(x)/\ell,\\[3pt]
p(x)/\ell, \,\,\,\,\,\,\,\,\,\,\,\,\,\,\,\,\,\,\,\,\,\,\,\,\,\,\,\,\,\,\,\,\,\,\,\,\,\,\,  q(x) \ge p(x)/\ell.
\end{array}\right.
\end{gathered}
\label{eq:lenience_yield}
\end{equation}

where $\Delta_\ell = \bigl( \tfrac{1}{2}\sum_{v\in\mathcal{V}} \lvert q(v) - p(v)/\ell \rvert
+ \tfrac{1}{2} - \tfrac{1}{2\ell} \bigr) \big/ \bigl( \tfrac{1}{2}\sum_{v\in\mathcal{V}} \lvert q(v) - p(v) \rvert \bigr)$.

This decomposition reveals an adaptive correction mechanism: when the draft \emph{underestimates} the target ($q \le p$), interpolation is applied; moderate overestimation ($p<q\le p/\ell$) is left unchanged (pure $q$); and only the \emph{severe} overshoot region ($q\ge p/\ell$) is corrected (see \Cref{fig:taxonomy}).

Intuitively, the ceiling $p/\ell$ term in~\Cref{eq:lenience_yield} caps the generation probability when the draft distribution is overconfident. This ceiling prevents the draft from dominating the output at tokens where it assigns probability far beyond what the target distribution $p$ would allow, even after relaxation by the lenience factor $\ell$. Separately, $\tfrac{1}{2}\sum_{v\in\mathcal{V}}|q(v)-p(v)|$ is the total variation distance between $q$ and $p$, which measures the total mismatched probability mass between the draft and target distributions. We provide a detailed interpretation in~\Cref{sec: further_interpretation}.

We reveal that Lenience~\citep{leviathan2023fast} adopts the linear relaxation
$p/\ell$. DistillSpec~\citep{zhou2024distillspec} extends this framework
by exploring alternative relaxation forms, including $p/\ell^2$ and
$p^\ell$.

\textbf{Judge-based relaxation} relaxes standard speculative decoding with a learned judgment signal. Let \(s_{\phi}(x)\in\mathbb{R}\) denote the confidence that a learned judge assigns to draft token $x$, and let \(j_{\phi}(x)=\mathds{1}\!\left[s_{\phi}(x)\ge\delta\right]\in\{0,1\}\) indicate whether the judge predicts that $x$ should be accepted, where $\delta \in [0,1]$. Combining with~\Cref{eq:lossless_acc} gives
\begin{equation}
\label{eq:judge_acc}
\resizebox{0.85\columnwidth}{!}{$h(x)
=
\begin{cases}
1, & q(x)\leq p(x),\\[3pt]
\dfrac{p(x)}{q(x)}
+
\left(1-\dfrac{p(x)}{q(x)}\right)j_{\phi},
& q(x)>p(x).
\end{cases}$}
\end{equation}

Thus, the resulting token distribution is:
\begin{equation}
\begin{gathered}
P(\mathrm{generate}\,x)=\\
\left\{
\begin{array}{@{}l@{\,}l@{}}
\Delta_j p(x) + \bigl(1-\Delta_j\bigr)q(x),
    \,\, q(x)\le p(x),\\[3pt]
p(x),
    \,\,\,\,\,\,\,\,\,\,\,\,\,\,\, q(x)>p(x)\ \text{and}\ j_{\phi}(x)=0,\\[3pt]
q(x), \,\,\,\,\,\,\,\,\,\,\,\,\,\,\,
     q(x)>p(x)\ \text{and}\ j_{\phi}(x)=1.
\end{array}
\right.
\end{gathered}
\label{eq:judge_pgenerate}
\end{equation}
where $\Delta_j = \bigl( \sum_{v\in\mathcal{V}} \bigl(q(v)-p(v)\bigr)_+ \bigl(1-j(v)\bigr) \bigr)
\big/ \bigl( \sum_{v\in\mathcal{V}} \bigl(p(v)-q(v)\bigr)_+ \bigr)$.
see \Cref{sec:judge_derivation} for detailed derivation.

We show that Judge Decoding~\citep{bachmann2025judge} and AutoJudge~
\citep{garipov2025autojudge}
share the same judge-based relaxation form and differ in how
the judge is trained (see \Cref{app:autojudge-implementation} for detailed experiment implementation).

In summary, collaborative verification methods differ primarily in how they handle draft-model overestimation ($q(x)>p(x)$). Interpolation-based relaxation continues to blend $p(x)$ and $q(x)$ using the same form in the underestimation region. Lenience-based relaxation retains $q(x)$ under moderate overestimation but caps severe overshoot at $p(x)/\ell$. Judge-based relaxation instead uses a learned judgement signal to determine whether the overestimated draft contribution should be retained.

\begin{figure*}[t]
    \centering
    \includegraphics[width=0.9\textwidth]
        {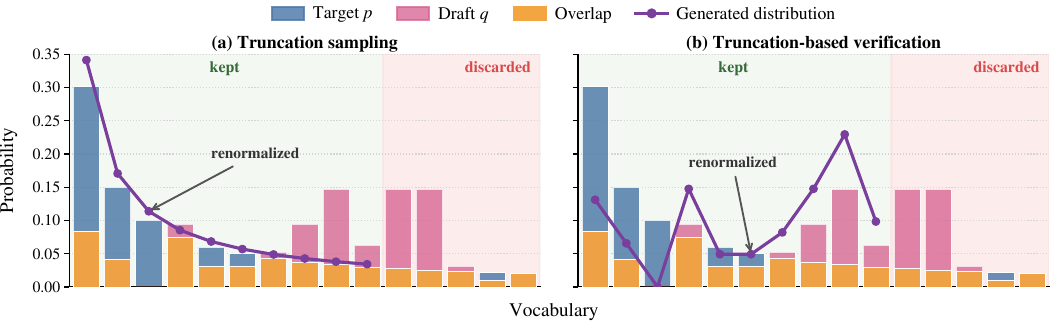}
    \caption{
    Comparison of distributions induced by truncation sampling and
    truncation-based verification. Tokens are sorted by decreasing target
    probability. Blue and pink bars denote the target distribution $p$
    and draft distribution $q$, respectively; orange denotes their
    overlap. The purple line shows the output distribution.
    }
    \label{fig:verification_mechanisms}
\end{figure*}

\subsection{Truncation-based Verification}
\label{sec:trunc-verification}
We formally define truncation-based verification. Within this framework, we distinguish two principal approaches: \textbf{SpecCascade}~\cite{narasimhan2024faster} and \textbf{typical acceptance}~\cite{cai2024medusa}.


\begin{definition}[Truncation-based verification]
\label{def:truncation-verification}
Let $\mathcal{A}_\Theta \subseteq \mathcal{V}$ denote the allowed set of a
truncation strategy $\Theta$. Truncation-based verification replaces the
acceptance rule in~\Cref{eq:lossless_acc} with
$h(x) = \mathds{1}\{x \in \mathcal{A}_\Theta\}$: a draft token is accepted if
it lies in $\mathcal{A}_\Theta$, and tokens outside this set are rejected.
\end{definition}


The resulting distribution is the renormalized draft distribution
restricted to $\mathcal{A}_\Theta$, where
$Z_\Theta(q)=\sum_{v\in\mathcal{A}_\Theta}q(v)$ is the normalization constant:
\begin{equation}
\label{eq:trunc_verif_pyield}
\resizebox{0.8\columnwidth}{!}{$
P(\mathrm{generate}\,x)=
\begin{cases}
q(x)/Z_\Theta(q), & x\in\mathcal{A}_\Theta,\\[2pt]
0, & x\notin\mathcal{A}_\Theta.
\end{cases}
$}
\end{equation}

Specifically, SpecCascade~\citep{narasimhan2024faster} and Medusa~\citep{cai2024medusa} are examples of truncation-based verification, with allowed sets defined by min-p (\Cref{eq:min-p}) and $\eta$-sampling (\Cref{eq:eta}), respectively. Their reported gains over standard speculative decoding conflate the effect of verification with the effect of truncation sampling itself. When evaluated against the appropriately truncated target, the actual performance gap becomes clear, as shown in \Cref{sec:exp-trunc}.

Having characterized existing lossy verification into collaborative and truncation-based verification, we now evaluate both paradigms empirically. As foreshadowed in the introduction, benchmark difficulty is itself diagnostic: easy tasks such as GSM8K mask the distortions introduced by lossy verification
We therefore evaluate on four harder benchmarks spanning distinct domains: MATH~\cite{hendrycks2021measuring} for mathematical reasoning, MBPP+~\cite{liu2023your} for code generation, INCLUDE~\citep{romanou2024include} for multilingual understanding, and BFCL~\citep{berkeley-function-calling-leaderboard} for agentic tool use. For efficiency we report Block Efficiency (BE), the number of tokens accepted per step, and Decoding Speed (DS), tokens per second; for task quality we report Accuracy on MATH, INCLUDE, and BFCL and Pass@1 on MBPP+. Configurations are given in \Cref{sec:exp_setup}.



\begin{figure*}[t!]
    \centering
    \includegraphics[width=0.84\textwidth]
        {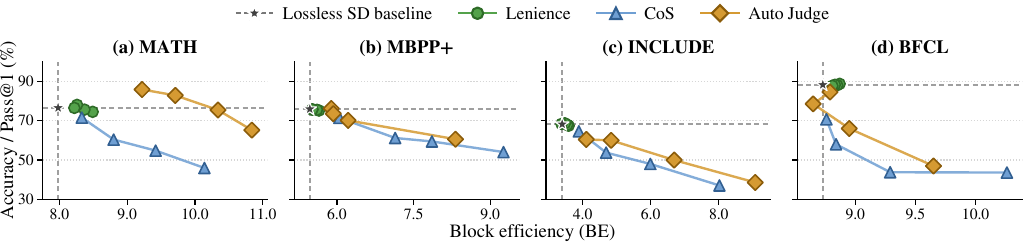}
    \caption{Efficiency and task performance tradeoff for collaborative verification
    across four benchmarks (Qwen2.5-72B/0.5B-Instruct-GPTQ-Int8). Each marker represents one hyperparameter
    setting, and lines connect settings in sweep order. Full results are in \Cref{sec: total_results}; the Llama pair and extended analysis are in \Cref{app:trade_off_extended}.
    }
    \label{fig:trade col}
\end{figure*}

\section{Identifying the Key Factor in Collaborative Verification}
\label{sec:exp-collaborative}

As analyzed in \Cref{sec: col vrf}, Lenience-based, interpolation-based, and
judge-based relaxation all fall under the broader category of collaborative
verification. We use Lenience~\cite{leviathan2023fast},
CoS~\cite{fu2025fastlargelanguagemodel}, and
AutoJudge~\cite{garipov2025autojudge} as representative methods of these
three subclasses, respectively.

\subsection{Overshoot Suppression Improves Performance}
\label{sec:exp-overshoot}

Both CoS and AutoJudge  exhibit a clear efficiency-performance tradeoff once relaxation becomes aggressive (see~\Cref{fig:trade col}). As illustrated by \Cref{eq:we_yield,eq:judge_pgenerate}, the generated distribution increasingly leans toward the draft distribution, and the relaxed acceptance rule leads to degraded task performance. In contrast, we observe that lenience-based relaxation maintains strong task performance while achieving slightly better efficiency than the baseline (\Cref{fig:trade col}). Since $q(x)$ is effectively the worst-case of overestimation region ($q>p$), the gain from lenience must come from either the adaptive interpolation in the underestimated region or the thresholding in the overshooting region. 

To disentangle these contributions, we further conduct an ablation study on MBPP+ to investigate the key factor behind the effectiveness of lenience-based relaxation. We selectively replace the generated distribution of CoS in the $q(x) < p(x)$ and $q(x) > p(x)/\lambda$ regions with either ceiling the overshoot region or the adaptive interpolation in the underestimated region, and then measure the resulting changes in efficiency and task performance tradeoff. As shown in \Cref{tab:ablation_yield_rule}, introducing adaptive interpolation in the undershoot region has similar severe tradeoff as CoS. In contrast, ceiling the overshooting region alone achieves task performance comparable to lossless verification, suggesting that ceiling is its primary contribution.




In summary, the ablation identifies the overshoot region, not the underestimated region, as the source of lenience-based relaxation's effectiveness: with the overshoot ceiling alone, task performance matches lossless verification while efficiency still improves, whereas the adaptive interpolation in the underestimated region inherits CoS's severe trade-off. This aligns with recent findings that low-quality generations arise primarily from overshoot tokens~\cite{zhou2025balancing, yue2025does, fan2026flatter}, and points to overshoot suppression, rather than interpolation, as the key design principle for effective collaborative verification.

\begin{table}[ht]
\centering
\small
\setlength{\tabcolsep}{4pt}
\renewcommand{\arraystretch}{1.2}
\caption{Ablation of the two mechanisms in lenience-based relaxation on
MBPP+.}
\label{tab:ablation_yield_rule}
\begin{tabular}{@{}lS[table-format=1.2]S[table-format=2.2]@{\hspace{14pt}}S[table-format=1.2]S[table-format=2.2]@{}}
\toprule

\addlinespace[4pt]
$\boldsymbol{\lambda}$ & {BE} & {Pass@1} & {BE} & {Pass@1} \\
\midrule
& \multicolumn{2}{c}{\makebox[0.40\columnwidth][c]{\hspace*{-10pt}\scriptsize\color{black!60}
$\begin{cases} q + \Delta_\ell (p-q), & q \le p\\ \lambda p + (1-\lambda) q, & q > p \end{cases}$}}
& \multicolumn{2}{c}{\makebox[0.40\columnwidth][c]{\hspace*{-10pt}\scriptsize\color{black!60}
$\begin{cases} \lambda p + (1-\lambda) q, & q \le p/\lambda\\ p/\lambda, & q > p/\lambda \end{cases}$}} \\
0.2 & 9.15 & 50.26 & 5.59 & 75.93 \\
0.4 & 7.93 & 56.61 & 5.57 & 75.13 \\
0.6 & 6.80 & 60.85 & 5.54 & 75.66 \\
0.8 & 5.95 & 66.14 & 5.54 & 75.13 \\
\bottomrule
\end{tabular}
\end{table}

\subsection{A Well-Trained Judge Is Promising} 
\label{sec:exp-judge-domain}

AutoJudge achieves the best trade-off among the collaborative methods. It
outperforms CoS over most of the sweep on every benchmark, and on MATH it
beats even the lossless baseline across a wide BE range for both model pairs~\Cref{fig:trade col,fig:trade_llama}.
Its weakness is stability: the sweep is non-monotone on some benchmarks, and
the margin over CoS varies by domain. We
attribute this pattern to the judge's supervision rather than to its resulting distribution~\Cref{eq:judge_pgenerate}. We train a single judge on GSM8K and LiveCodeBench mismatches according to the official release
(\Cref{app:autojudge-implementation}). GSM8K contributes 7{,}473 problems against LiveCodeBench's 880, so the training data is mostly mathematical reasoning, and
MATH is where the judge beats CoS by the widest margin.

To test this directly, we sweep the training data mixture from math-dominated ($9{:}1$) to code-dominated ($1{:}9$). As shown in~\Cref{tab:mix_sweep_q}, interestingly, the math-dominated judge outperforms the code-dominated one on every benchmark, including MBPP+: upweighting LiveCodeBench does not improve the one benchmark closest to it. The math-heavy judges also stay usable across all four benchmarks, whereas the code-heavy ones trade task performance for acceptance.

\begin{table}[ht]
  \centering
  \small
  \setlength{\tabcolsep}{3.5pt}
  \caption{Mixture-ratio sweep. Each cell averages the
  four acceptance thresholds $\delta\in\{0.2, 0.4, 0.6, 0.8\}$.}
  \label{tab:mix_sweep_q}
  \begin{tabular}{@{}l cc cc cc cc@{}}
    \toprule
    \multirow{2}{*}{\textbf{Mix}}
      & \multicolumn{2}{c}{\textbf{MATH}}
      & \multicolumn{2}{c}{\textbf{MBPP+}}
      & \multicolumn{2}{c}{\textbf{INCLUDE}}
      & \multicolumn{2}{c}{\textbf{BFCL}} \\
    \cmidrule(lr){2-3}\cmidrule(lr){4-5}\cmidrule(lr){6-7}\cmidrule(l){8-9}
      & BE & Acc & BE & P@1 & BE & Acc & BE & Acc \\
    \midrule
    9:1 & 8.62 & \textbf{83.10} & 5.82 & \textbf{76.72} & 3.90 & \textbf{62.96} & 8.89 & \textbf{88.00} \\
    7:3 & 9.67 & 73.95 & 6.65 & 69.84 & 6.50 & 59.09 & 9.39 & 77.38 \\
    5:5 & 9.67 & 74.55 & 7.60 & 60.12 & 7.12 & 47.50 & 9.53 & 77.00 \\
    3:7 & 9.71 & 73.40 & 6.66 & 71.63 & 6.98 & 50.91 & 9.49 & 82.50 \\
    1:9 & 10.55 & 64.45 & 8.34 & 57.87 & 8.89 & 43.52 & 9.92 & 63.00 \\
    \bottomrule
  \end{tabular}
\end{table}


In conclusion, the mixture sweep isolates supervision quality as what governs judge behavior, which makes judge-based relaxation promising. Unlike interpolation-based and lenience-based relaxation, whose induced distributions are fixed in closed form, the judge's ceiling is set by its supervision and moves as the supervision improves. With sufficient supervision it already beats lossless SD on both efficiency and task performance, so improving judge supervision is a concrete direction for extending that result beyond a single domain.

\section{Revealing the Pitfall in Truncation-Based Verification}
\label{sec:exp-trunc}

We re-evaluate two representative truncation-based verification methods, i.e., typical acceptance~\citep{cai2024medusa} and SpecCascade~\citep{narasimhan2024faster}, against their matched truncation sampling baselines ($\eta$-sampling and Min-$p$ sampling, respectively) directly applied on the target model with lossless verification~\citep{leviathan2023fast}. This setup isolates whether the reported gains stem from the  verification mechanism itself or from the underlying truncation sampling.  Single-draft experiments are conducted within the standard SD framework~\cite{leviathan2023fast}, and multi-draft experiments within EAGLE-3~\cite{li2025eagle}.

\subsection{Standard SD}


Across multiple benchmarks (\Cref{fig:combined_trend}), truncation-based verification exhibits a clear performance gap relative to its matched baseline under standard speculative decoding. Min-p sampling and $\eta$-sampling outperform SpecCascade and typical acceptance respectively.
As formalized in \Cref{def:truncation-verification}, this degradation arises because tokens within the allowed set are always accepted, producing a distribution that mirrors the draft model rather than the target distribution (see \Cref{fig:verification_mechanisms}). Overall, these results reinforce our claim that the apparent gains of truncation-based verification are largely driven by truncation itself, while the verification mechanism can further distort the decoding distribution and reduce task performance.

Beyond the performance gaps, we investigate truncation sampling effect in Lemma~\ref{lem:trunc-be}. It decomposes the truncation-induced change in acceptance probability into a non-negative \emph{gain} term, accumulated over the retained support $A$, and a \emph{loss} term, accumulated over the discarded region at the tail (proof see \Cref{app:be-tree}). $\Delta\mathrm{BE}$ thus determines whether truncation sampling hurts SD efficiency.

\begin{lemma}[Truncation sampling efficiency effect]
\label{lem:trunc-be}
Denote allowed set $\mathcal{A}_\Theta \subseteq \mathcal{V}$ determined by a truncation strategy $\Theta$, $z = \textstyle\sum_{v \in \mathcal{A}_\Theta} p(v)$.
The change in single-token acceptance probability induced by truncation sampling satisfies
\begin{equation}
\begin{aligned}
\Delta\mathrm{BE}
= \sum_{x \in \mathcal{A}_\Theta}
  \min\big\{ (q(x) - p(x))_+,\, \\(\tfrac{1}{z} - 1) p(x) \big\} 
- \sum_{x \notin \mathcal{A}_\Theta} \min\big(p(x), q(x)\big).
\end{aligned}
\label{eq:delta-be}
\end{equation}
\end{lemma}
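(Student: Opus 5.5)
Write $\alpha(p,q)=\sum_{x\in\mathcal V}\min\{p(x),q(x)\}$ for the single-token acceptance probability of lossless verification (Eq.~\eqref{eq:lossless_acc}). The plan is to interpret $\Delta\mathrm{BE}$ as $\alpha(p_\Theta,q)-\alpha(p,q)$: the target is replaced by its truncated version $p_\Theta$ from Eq.~\eqref{eq:truncation-sampling}, while the draft $q$ is left unchanged. The claim then reduces to a pointwise identity, summed separately over $\mathcal A_\Theta$ and over its complement.

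\textbf{Step 1 (discarded region).} For $x\notin\mathcal A_\Theta$ we have $p_\Theta(x)=0$. Hence $\min\{p_\Theta(x),q(x)\}=0$, and the per-token change is $-\min\{p(x),q(x)\}$. Summing over $x\notin\mathcal A_\Theta$ gives exactly the loss term.

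\textbf{Step 2 (retained region).} For $x\in\mathcal A_\Theta$ we have $p_\Theta(x)=p(x)/z$, and $z\le 1$ gives $p(x)/z\ge p(x)$. I will use the elementary identity, valid for $a\le b$:
\[
\min\{b,c\}-\min\{a,c\}=\min\{(c-a)_+,\,b-a\}.
\]
It follows from three cases. If $c\le a$, both sides are $0$. If $a<c\le b$, both sides equal $c-a$. If $c>b$, both sides equal $b-a$. Substituting $a=p(x)$, $b=p(x)/z$, $c=q(x)$ gives
\[
\min\{p(x)/z,q(x)\}-\min\{p(x),q(x)\}=\min\bigl\{(q(x)-p(x))_+,\,(\tfrac1z-1)p(x)\bigr\}.
\]
This term is non-negative, which is the gain term.

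\textbf{Step 3.} Adding the contributions of Steps~1 and~2 over $\mathcal V=\mathcal A_\Theta\sqcup\mathcal A_\Theta^c$ yields Eq.~\eqref{eq:delta-be}.

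The main obstacle is not the algebra but fixing the right interpretation of $\Delta\mathrm{BE}$. The statement should be read as the change in the single-step acceptance probability $\alpha$, with truncation applied to the target only (the truncated-target baseline of \Cref{sec:exp-trunc}). Block efficiency over $\gamma$ drafted tokens is monotone in $\alpha$, so the sign of $\Delta\mathrm{BE}$ as defined is the quantity that decides whether truncation helps or hurts. I would state this convention explicitly at the start of the proof. I would also note that $z>0$, since $\mathcal A_\Theta$ contains the argmax of $p$ for both min-$p$ and $\eta$-sampling.
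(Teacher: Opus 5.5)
Your proposal is correct and follows essentially the same route as the paper. The paper also writes the single-token acceptance probability as $\sum_{x}\min(\tilde p(x),q(x))$ and sets $\Delta\mathrm{BE}=A(p_\Theta,q)-A(p,q)$ with the draft left unchanged; it then splits the sum over $\mathcal{A}_\Theta$ and its complement and evaluates the retained-region difference by the two cases $q(x)\le p(x)$ and $q(x)>p(x)$, which your three-case identity only refines (your remark that $z>0$ is a small point the paper leaves implicit).
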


To make this concrete, \Cref{fig:net_comparison} plots the two terms from Lemma~\ref{lem:trunc-be} across a range of distribution pairs. Under min-$p$ sampling, the gain dominates at smaller $p_{\text{base}}$, but the margin narrows  as $p_{\text{base}}$ approaches $0.9$, where the discarded mass begins to outweigh the redistributed gain. Under $\eta$-sampling, $\Delta\mathrm{BE} > 0$ uniformly across all distribution pairs and $\epsilon$, and the net gain grows monotonically with $\epsilon$ despite wider confidence bands. This indicates that $\eta$-sampling reliably converts mass from the truncated tail into additional acceptance probability on the retained support, whereas min-$p$ does not.

\begin{figure}[ht]
    \centering
    \includegraphics[width=\columnwidth]
        {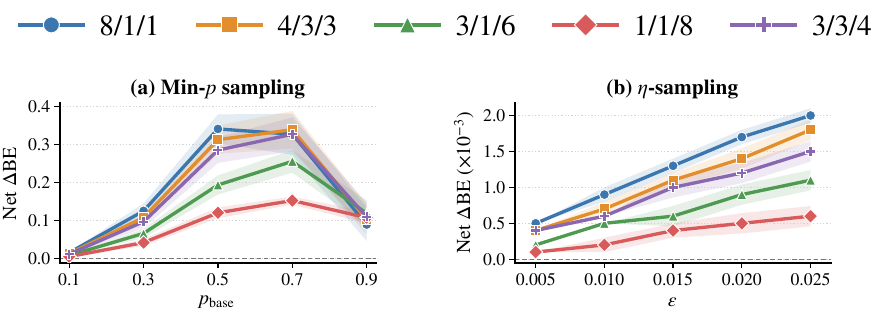}
    \caption{
    Net change in $\Delta\mathrm{BE}$ under different
    draft--target alignment ratios for Min-$p$ and $\eta$-sampling.
    Each triplet reports the ratios of matching, partially overlapping,
    and unrelated candidate tokens, respectively.
    }
    \label{fig:net_comparison}
\end{figure}

\begin{figure*}[t]
    \centering
    \includegraphics[width=0.9\textwidth]
        {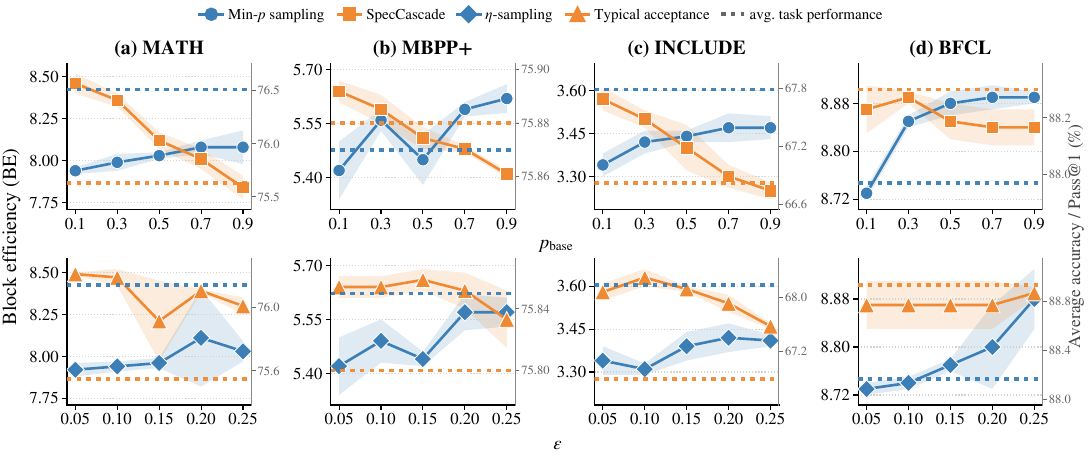}
    \caption{
    Block efficiency and task performance of truncation-based methods
    using the Qwen2.5-72B and Qwen2.5-0.5B pair across four benchmarks.
    \textit{Top row}: Min-$p$ sampling and SpecCascade.
    \textit{Bottom row}: $\eta$-sampling and typical acceptance.
    Solid lines report mean block efficiency (BE).
    Colored dotted horizontal lines report accuracy or Pass@1, averaged over hyperparameter settings. 
    }
    \label{fig:combined_trend}
\end{figure*}

\begin{table*}[t]
\centering
\scriptsize
\caption{Evaluation results using LLaMA-3.1 8B and its official released pair under EAGLE-3, averaged across the parameter grid for each method. \textbf{Bold} denotes the better task performance within each matched pair. \textcolor{gray}{Gray} denotes the EAGLE-3 baseline, which is not directly comparable. 
}
\label{tab:eagle_multi_avg}
\setlength{\tabcolsep}{4pt}
\renewcommand{\arraystretch}{1.15}
\resizebox{1\linewidth}{!}{
\begin{tabular}{lccc ccc ccc ccc}
\toprule
\multirow{2}{*}{\textbf{Method}}
& \multicolumn{3}{c}{\textbf{MATH}}
& \multicolumn{3}{c}{\textbf{MBPP+}}
& \multicolumn{3}{c}{\textbf{INCLUDE}}
& \multicolumn{3}{c}{\textbf{BFCL}} \\
\cmidrule(lr){2-4}\cmidrule(lr){5-7}\cmidrule(lr){8-10}\cmidrule(lr){11-13}
& BE & DS & Acc\,(\%) & BE & DS & Pass@1\,(\%) & BE & DS & Acc\,(\%) & BE & DS & Acc\,(\%) \\
\midrule
\textcolor{gray}{EAGLE-3}~\citep{li2025eagle}
 & \textcolor{gray}{3.76} & \textcolor{gray}{140.21} & \textcolor{gray}{73.20} & \textcolor{gray}{4.70} & \textcolor{gray}{167.83} & \textcolor{gray}{59.30} & \textcolor{gray}{0.67} & \textcolor{gray}{45.72} & \textcolor{gray}{35.50} & \textcolor{gray}{2.57} & \textcolor{gray}{85.27} & \textcolor{gray}{86.00} \\
\midrule
Min-$p$ sampling~\citep{nguyen2025turningheatminpsampling}
 & 3.98 & 143.22 & \textbf{76.88} & 4.90 & 168.57 & \textbf{61.80} & 0.56 & 45.80 & \textbf{37.00} & 2.58 & 84.26 & \textbf{86.60} \\
SpecCascade~\citep{narasimhan2024faster}
 & 4.22 & 152.74 & 76.16 & 5.00 & 175.91 & 60.74 & 0.78 & 48.13 & 33.27 & 2.59 & 85.20 & 85.40 \\
\midrule
$\eta$-sampling~\citep{hewitt2022truncation}
 & 3.93 & 140.05 & \textbf{76.12} & 4.87 & 165.61 & \textbf{62.17} & 0.55 & 45.33 & \textbf{36.73} & 2.58 & 79.77 & \textbf{86.30} \\
Typical acceptance~\citep{cai2024medusa}
 & 4.61 & 162.73 & 69.84 & 5.20 & 180.10 & 56.88 & 1.08 & 55.54 & 27.91 & 2.64 & 85.61 & 81.40 \\
\bottomrule
\end{tabular}}
\end{table*}



The contrasting trends in \Cref{fig:combined_trend} further show that the benchmark experiments are consistent with both the theoretical analysis in Lemma~\ref{lem:trunc-be} and the simulation in \Cref{fig:net_comparison}. The lemma predicts that the efficiency effect of truncation depends on the balance between the gain and the loss. This balance is reflected in the simulation: min-$p$ sampling yields positive gains at moderate thresholds but becomes less favorable as $p_{\text{base}}$ approaches $0.9$, whereas $\eta$-sampling maintains positive $\Delta\mathrm{BE}$ across the tested range. The benchmark results follow the same pattern: min-$p$ sampling rises to a peak and then declines, while $\eta$-sampling increases more consistently, with only minor deviations. Overall, these aligned trends support the theoretical decomposition and the controlled simulation.

\subsection{EAGLE-3}

Beyond standard SD, EAGLE-3 is of particular interest because it operates over multiple drafts arranged in a tree. We are therefore intrigued to investigate whether lossy verification behaves differently in this setting. \Cref{thm:kl-gap} answers this by characterizing the divergence from the matched truncation sampling target under both schemes.

\begin{theorem}[Draft-target divergence under tree verification]
\label{thm:kl-gap}
For a strategy $\Theta$ with allowed set $\mathcal{A}_\Theta$, let
$Z_\Theta(p) = \sum_{x \in \mathcal{A}_\Theta} p(x)$ and
$Z_\Theta(q) = \sum_{x \in \mathcal{A}_\Theta} q(x)$ denote the retained
target and draft mass. The KL divergence between the distributions generated
by truncation-based verification and truncation sampling is
\begin{equation}
\label{eq:kl_gap}
\resizebox{\columnwidth}{!}{$
\mathrm{KL}_{\mathrm{SD}} = \mathbb{E}\bigl[ \sum_{x\in\mathcal{A}_\Theta}
\tfrac{q(x)}{Z_\Theta(q)} \log \tfrac{q(x) Z_\Theta(p)}{Z_\Theta(q) p(x)} \bigr],
\quad
\mathrm{KL}_{\mathrm{EAGLE}} = \mathbb{E}\bigl[ \log \tfrac{Z_\Theta(p)}{p(x)} \bigr]
$}\notag
\end{equation}
for standard SD and EAGLE-3 respectively, and these behave differently as the
draft improves:
$\lim_{q\to p}\mathrm{KL}_{\mathrm{SD}} = 0$, whereas
$\mathrm{KL}_{\mathrm{EAGLE}} > 0$.
\end{theorem}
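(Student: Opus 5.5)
Both KL expressions come from writing down the distribution each scheme actually produces, comparing it with the truncation-sampling target $p_\Theta$ of \Cref{eq:truncation-sampling}, and then taking limits. The outer expectation $\mathbb{E}$ is over decoding contexts (prefixes). Everything below is per context, and we average at the end.

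\textbf{Standard SD.} By \Cref{def:truncation-verification}, truncation-based verification yields the distribution in \Cref{eq:trunc_verif_pyield}, namely $r_{\mathrm{SD}}(x)=q(x)/Z_\Theta(q)$ on $\mathcal{A}_\Theta$ and $0$ outside. The target is $p_\Theta(x)=p(x)/Z_\Theta(p)$ on the same set. Because both are supported on $\mathcal{A}_\Theta$, $\mathrm{KL}(r_{\mathrm{SD}}\|p_\Theta)$ is a sum over $x\in\mathcal{A}_\Theta$ of $r_{\mathrm{SD}}(x)\log\bigl(r_{\mathrm{SD}}(x)/p_\Theta(x)\bigr)$. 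Substituting gives exactly the ratio $q(x)Z_\Theta(p)/(Z_\Theta(q)p(x))$, and this is $\mathrm{KL}_{\mathrm{SD}}$.

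For the limit, note that as $q\to p$ pointwise on the finite vocabulary:
\begin{itemize}[noitemsep]
\item $Z_\Theta(q)\to Z_\Theta(p)>0$, since $\mathcal{A}_\Theta$ is nonempty (it always contains the argmax under min-$p$ and $\eta$ thresholds);
\item every summand tends to $p_\Theta(x)\log 1=0$, and continuity on a finite sum gives $\lim_{q\to p}\mathrm{KL}_{\mathrm{SD}}=0$.
\end{itemize}
One technical point: $\mathcal{A}_\Theta$ is defined from $p$, so it does not move with $q$. Continuity in $q$ is therefore harmless, and we can pass the limit through the expectation by bounded convergence, using $p(x)\ge$ the truncation threshold on $\mathcal{A}_\Theta$.

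\textbf{EAGLE-3.} This case needs a model of what tree verification does under a set-membership rule. Our plan is to formalize it as follows. EAGLE-3 expands many candidate children per node, typically the top-$k$ of the draft. Under $h(x)=\mathds{1}\{x\in\mathcal{A}_\Theta\}$, the engine accepts the first candidate, in the tree's traversal order, that lies in $\mathcal{A}_\Theta$. In the regime the theorem targets, the tree covers $\mathcal{A}_\Theta$, and this selection is deterministic given the context rather than a sample. The emitted token $x$ is then a point mass $\delta_x$, with $x\in\mathcal{A}_\Theta$.

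The KL divergence from a point mass to $p_\Theta$ is $\log\bigl(1/p_\Theta(x)\bigr)=\log\bigl(Z_\Theta(p)/p(x)\bigr)$, and averaging over contexts gives $\mathrm{KL}_{\mathrm{EAGLE}}$.

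Strict positivity comes from a simple inequality. We have $p(x)\le Z_\Theta(p)$, with equality only if $\mathcal{A}_\Theta=\{x\}$ up to zero-mass tokens. So each term is $\ge 0$, and it is $>0$ whenever at least two tokens retain positive mass. The expectation is positive as long as this happens on a set of contexts of positive probability. This condition has to be stated as a mild nondegeneracy assumption.

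This bound does not involve $q$ beyond which $x$ is selected. So the gap persists even as $q\to p$: deterministic tree selection collapses the distribution no matter how good the draft is.

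\textbf{Main obstacle.} The hard step is the EAGLE-3 modeling, namely justifying that tree verification under set membership produces a (near-)deterministic choice, so that the output law is a point mass. We would try first to argue it from EAGLE-3's deterministic top-$k$ tree expansion combined with the acceptance indicator: once the first in-set candidate is accepted, no randomness remains.

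If that argument fails, the fallback is to state it as the setting's assumption, and to show more generally that any selection rule whose output law does not converge to $p_\Theta$ has KL bounded away from zero. The key ingredient there is that the selection depends on the tree order rather than on $q$-proportional sampling.
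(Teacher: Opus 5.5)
Your proposal is correct and matches the paper's proof. The SD part is the same direct KL computation between $q/Z_\Theta(q)$ and $p_\Theta$ on $\mathcal{A}_\Theta$. The EAGLE-3 part uses the same tree model: greedy drafting fixes a deterministic candidate set, and the first candidate lying in $\mathcal{A}_\Theta$ is accepted surely, so each position emits a point mass with divergence $\log\bigl(Z_\Theta(p)/p(x)\bigr)$, positive whenever $|\mathcal{A}_\Theta|>1$.

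Your treatment is slightly more careful than the paper's. The paper just substitutes $q=p$, whereas you use continuity and bounded convergence, and you state a nondegeneracy condition over contexts. The paper's $q$-independent bound $\log\bigl(1/\max_{v\in\mathcal{A}_\Theta}p_\Theta(v)\bigr)>0$ is the clean way to make your ``gap persists as $q\to p$'' remark precise. Also, the argument needs only that some candidate lies in $\mathcal{A}_\Theta$, not that the tree covers $\mathcal{A}_\Theta$.
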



In particular, $\mathrm{KL}_{\mathrm{SD}} < \mathrm{KL}_{\mathrm{EAGLE}}$ for all $q$ sufficiently close to $p$: improving draft drives per-token distortion to zero under standard SD but not under EAGLE-3. We therefore expect task-performance gap between truncation-based verification and truncation sampling larger under EAGLE-3.


Consistent with the theoretical prediction, under EAGLE-3, where the divergence in \Cref{thm:kl-gap} does not vanish, the average performance gap to the matched truncation sampling baseline widens for both SpecCascade and typical acceptance (\Cref{tab:eagle_multi_avg}), compared with standard SD (\Cref{fig:combined_trend}). The degradation is severe enough that typical acceptance falls below the EAGLE-3 baseline on all four benchmarks, and SpecCascade on two, while the matched truncation sampling baselines stay above it. What the truncation-based rules buy in return is a small block-efficiency gain (\Cref{app:be-tree}), which does not offset the distortion.

\section{Conclusion}
\label{conclusion}

We provide a unified view of lossy verification in speculative decoding, showing that existing methods fall into two paradigms: truncation-based verification and collaborative verification. Our analysis reveals a critical pitfall of truncation-based methods, where performance can collapse relative to the true baseline due to distributional distortion, especially in EAGLE-3, and identifies the importance of controlling overshoot in lossy verification to preserve generation quality. These findings offer principled guidance for speculative decoding that enable faster, quality-preserving LLM inference.


\section*{Limitations}
Our study has several limitations. First, the evaluation focuses on reasoning, code generation, multilingual understanding, and function calling, as represented by MATH, MBPP+, INCLUDE, and BFCL. All four admit automatic verification against a reference answer, which makes the quality side of the speed–quality trade-off measurable, but it also means our evaluation excludes open-ended generation and dialogue. This exclusion is a scope condition on our conclusions rather than a matter of coverage, because truncation sampling was introduced for open-ended generation: min-p sampling~\cite{nguyen2025turningheatminpsampling} and $\eta$-sampling~\cite{hewitt2022truncationsamplinglanguagemodel} are designed to suppress incoherent continuations from the unreliable tail, and that benefit is largely invisible on tasks scored by a single extractable answer. Second, our unification centers on verification mechanisms rather than the complete speculative decoding pipeline; therefore, it does not fully capture interactions with draft-model design, candidate construction, tree expansion, or system-level scheduling. Finally, wall-clock speedups depend on the underlying hardware, implementation, and serving stack. Although we report block efficiency as a comparatively hardware-agnostic proxy, the exact speed–quality trade-off may vary across deployment environments.

\section*{Ethics Statement}

This work analyzes verification schemes for speculative decoding and does not involve human participants, private or sensitive data, or the collection of new datasets. All experiments use publicly available models and established benchmarks in accordance with their respective licenses and intended research purposes.

Our findings aim to improve the reliability of accelerated LLM inference by identifying quality-degradation and distributional-distortion failure modes that may otherwise be overlooked. Although this work does not introduce new model capabilities, improvements in inference efficiency could indirectly lower the cost of deploying existing language models, including for potentially harmful applications. Moreover, lossy verification may affect generation quality differently across tasks, languages, or domains not covered by our evaluation. We therefore recommend evaluating both efficiency and output quality before deployment, particularly in high-stakes settings.

We report the models, benchmarks, evaluation settings, and hardware configurations used in our experiments to support transparency and reproducibility.

\bibliography{Main}

\clearpage
\onecolumn
\appendix

\section{Detailed Derivations for Collaborative Verification Methods}
\label{sec:detailed_derivation}

This section derives the generation distributions induced by Lenience, Judge-Based Relaxation, DIVERSED and
Fuzzy Speculative Decoding (Fuzzy SD). For DIVERSED and Fuzzy
SD, the resulting distribution can be interpreted as a context-dependent
combination of the target distribution $p$ and draft distribution $q$.

\subsection{DIVERSED}
\label{sec:diversed_derivation}

DIVERSED~\citep{wang2026diversed} generalizes the static ensemble used
by CoS by predicting a task- and context-dependent ensemble weight.
At decoding step $t$, the method computes
\begin{equation}
\lambda_{t}
=
f\!\left(h_t^q,h_t^p\right),
\qquad
\lambda_{t}\in[0,1],
\label{eq:diversed_weight}
\end{equation}
where $h_t^q$ and $h_t^p$ are representations produced by the draft and
target models. For brevity, we omit the time index and write
$\lambda=\lambda_{t}$.

DIVERSED constructs the context-dependent ensemble distribution
\begin{equation}
\nu(x)
=
\lambda p(x)
+
\bigl(1-\lambda\bigr)q(x).
\label{eq:diversed_ensemble}
\end{equation}

The draft token is then verified against $\nu$, rather than
directly against $p$, using
\begin{equation}
h(x)
=
\min\!\left(
1,\frac{\nu(x)}{q(x)}
\right).
\label{eq:diversed_acceptance}
\end{equation}

Upon rejection, the corresponding speculative-sampling correction
distribution is
\begin{equation}
r(x)
=
\frac{
\bigl(\nu(x)-q(x)\bigr)_+
}{
\sum_{v\in\mathcal V}
\bigl(\nu(v)-q(v)\bigr)_+
}.
\label{eq:diversed_residual}
\end{equation}

Let
\begin{equation}
R
=
1-\sum_{v\in\mathcal V}q(v)h(v)
\end{equation}
denote the rejection probability. Since
\begin{equation}
q(x)h(x)
=
\min\!\left(q(x),\nu(x)\right),
\end{equation}
we have
\begin{equation}
R
=
\sum_{v\in\mathcal V}
\bigl(q(v)-\nu(v)\bigr)_+.
\label{eq:diversed_rejection}
\end{equation}

Because both $q$ and $\nu$ are normalized distributions,
\begin{equation}
\sum_{v\in\mathcal V}
\bigl(q(v)-\nu(v)\bigr)_+
=
\sum_{v\in\mathcal V}
\bigl(\nu(v)-q(v)\bigr)_+.
\label{eq:diversed_mass_equality}
\end{equation}

Substituting
\Cref{eq:diversed_acceptance,eq:diversed_residual} gives
\begin{align}
P(\mathrm{generate}\,x)
&=
\min\!\left(q(x),\nu(x)\right)
+
R r(x)\\
&=
\min\!\left(q(x),\nu(x)\right)
+
\bigl(\nu(x)-q(x)\bigr)_+\\
&=
\nu(x).
\end{align}

Therefore,
\begin{equation}
\boxed{
P(\mathrm{generate}\,x)
=
\lambda p(x)
+
\bigl(1-\lambda\bigr)q(x).
}
\label{eq:diversed_yield}
\end{equation}

DIVERSED thus belongs directly to collaborative verification. Unlike
CoS, which uses a fixed global coefficient, DIVERSED predicts
$\lambda$ from the current task and decoding context. However, at
each individual decoding position, the coefficient remains shared
across all vocabulary tokens.

\subsection{Fuzzy Speculative Decoding}
\label{sec:fuzzy_derivation}

Fuzzy SD~\citep{holsman2025fuzzy} makes a deterministic acceptance
decision based on the divergence between the full draft and target
distributions at the current position. Define the context-level gate
\begin{equation}
g_T
=
\mathbb I
\left[
D\!\left(p(\cdot),q(\cdot)\right)<T
\right],
\qquad
g_T\in\{0,1\},
\label{eq:fuzzy_gate}
\end{equation}
where $D$ is a distributional divergence and $T$ is a user-specified
threshold.

Because $g_T$ depends on the complete pair of distributions rather than
on the sampled token, the acceptance probability is
\begin{equation}
h_T(x)=g_T
\qquad
\text{for every }x\in\mathcal V.
\label{eq:fuzzy_acceptance}
\end{equation}

When $g_T=1$, the divergence is below the threshold and every candidate
drawn from $q$ is accepted. Therefore,
\begin{equation}
P(\mathrm{generate}\,x\mid g_T=1)=q(x).
\label{eq:fuzzy_accept_case}
\end{equation}

When $g_T=0$, the candidate is rejected and Fuzzy SD uses the target
decoding distribution for correction. Thus,
\begin{equation}
P(\mathrm{generate}\,x\mid g_T=0)=p(x).
\label{eq:fuzzy_reject_case}
\end{equation}

Equivalently, substituting $h_T(x)=g_T$ and $r(x)=p(x)$ yields
\begin{align}
P(\mathrm{generate}\,x)
&=
g_Tq(x)
+
\left(
1-\sum_{v\in\mathcal V}g_Tq(v)
\right)p(x)\\
&=
g_Tq(x)
+
(1-g_T)p(x),
\end{align}
because $\sum_vq(v)=1$. Hence,
\begin{equation}
\boxed{
P(\mathrm{generate}\,x)
=
(1-g_T)p(x)+g_Tq(x).
}
\label{eq:fuzzy_yield}
\end{equation}

Fuzzy SD can therefore be interpreted as a binary,
context-dependent collaborative verification method:
\begin{equation}
P(\mathrm{generate}\,x)
=
\begin{cases}
q(x),
&
D\!\left(p(\cdot),q(\cdot)\right)<T,\\[3pt]
p(x),
&
D\!\left(p(\cdot),q(\cdot)\right)\ge T.
\end{cases}
\label{eq:fuzzy_piecewise}
\end{equation}

Unlike CoS and DIVERSED, Fuzzy SD does not use a soft interpolation
coefficient in $(0,1)$. Instead, it selects one endpoint of the
$p$--$q$ interpolation according to the current distributional
discrepancy.

\subsection{Lenience}
\label{sec:lenience_derivation}
In tokenwise speculative sampling~\citep{leviathan2023fast}, each token $x$ is drafted from $q(x)$ and verified against $p(x)$. It is accepted with probability $h(x)=\min\{1,\,p(x)/\ell q(x)\}$, or rejected and replaced from $P_{\text{res}}(x)$. Thus the yield probability is:

\begin{align}
&P(\text{generate} \, x) \nonumber=  P(\text{draft and accept} \,x)\nonumber+ P(\text{draft and reject} \,v,\, \text{resampled} \, x)\nonumber \\
&=q(x)h(x)+\Bigl[\sum_{v\in \mathcal{V}} q(v)(1-h(v))\Bigr]\nonumber\times\frac{p(x) - \min\{p(x),q(x)\}}{\textstyle\sum_{v\in \mathcal{V}} \bigl(p(v) - \min\{p(v),q(v)\}\bigr)}\nonumber \\
&=\min\Bigl\{q(x),\,\frac{p(x)}{\ell }\Bigr\}\nonumber+\frac{\textstyle\sum_{v\in \mathcal{V}} \bigl(q(v)-\min\{q(v),\,p(v)/\ell \}\bigr)}{\textstyle\sum_{v\in \mathcal{V}} \bigl(q(v) - \min\{q(v),\,p(v)\}\bigr)}\nonumber\times\Bigl[p(x) - \min\{p(x),q(x)\}\Bigr]\nonumber\\
&=\min\Bigl\{q(x),\,\frac{p(x)}{\ell }\Bigr\}\nonumber+ \Delta_\ell\Bigl[p(x) - \min\{p(x),q(x)\}\Bigr]\nonumber\\
&=\begin{cases}
q(x)+\Delta_\ell(p(x)-q(x)),  &q(x)\le p(x),\\
q(x), & p(x)\le q(x)\le p(x)/\ell,\\
p(x)/\ell, &q(x)\ge p(x)/\ell.
\end{cases} \nonumber
\end{align}

\paragraph{Derivation for $\Delta_\ell$}

\begin{align}
\Delta_\ell
&=
\frac{
\sum_{v\in \mathcal{V}}
\left(
q(v)-\min\left\{q(v),\,\frac{p(v)}{\ell}\right\}
\right)
}{
\sum_{v\in \mathcal{V}}
\left(
q(v)-\min\left\{q(v),\,p(v)\right\}
\right)
}.
\end{align}

Using the identity
\begin{align}
a-\min\{a,b\}
=
\frac{|a-b|+(a-b)}{2},
\end{align}
the numerator becomes
\begin{align}
&\sum_{v\in \mathcal{V}}
\left(
q(v)-\min\left\{q(v),\,\frac{p(v)}{\ell}\right\}
\right)=
\frac12
\sum_{v\in \mathcal{V}}
\left|
q(v)-\frac{p(v)}{\ell}
\right|+
\frac12
\sum_{v\in \mathcal{V}}
\left(
q(v)-\frac{p(v)}{\ell}
\right).
\end{align}
Since \(p\) and \(q\) are probability distributions on \(\mathcal{V}\),
\begin{align}
\sum_{v\in \mathcal{V}} q(v)=1,
\qquad
\sum_{v\in \mathcal{V}} p(v)=1,
\end{align}
we have
\begin{align}
\sum_{v\in \mathcal{V}}
\left(
q(v)-\frac{p(v)}{\ell}
\right)
=
1-\frac{1}{\ell}.
\end{align}
Therefore,
\begin{align}
&\sum_{v\in \mathcal{V}}
\left(
q(v)-\min\left\{q(v),\,\frac{p(v)}{\ell}\right\}
\right)
=
\frac12
\sum_{v\in \mathcal{V}}
\left|
q(v)-\frac{p(v)}{\ell}
\right|
+
\frac12-\frac{1}{2\ell}.
\end{align}

Similarly, the denominator satisfies
\begin{align}
\sum_{v\in \mathcal{V}}
\left(
q(v)-\min\left\{q(v),\,p(v)\right\}
\right)
=
\frac12
\sum_{v\in \mathcal{V}}
\left|
q(v)-p(v)
\right|
+
\frac12
\sum_{v\in \mathcal{V}}
\left(
q(v)-p(v)
\right).
\end{align}

Again using
\begin{align}
\sum_{v\in \mathcal{V}} q(v)
=
\sum_{v\in \mathcal{V}} p(v)
=
1,
\end{align}
we obtain
\begin{align}
\sum_{v\in \mathcal{V}}
\left(
q(v)-p(v)
\right)
=
0,
\end{align}
and hence
\begin{align}
\sum_{v\in \mathcal{V}}
\left(
q(v)-\min\left\{q(v),\,p(v)\right\}
\right)
=
\frac12
\sum_{v\in \mathcal{V}}
\left|
q(v)-p(v)
\right|.
\end{align}

Substituting these two expressions back into \(\Delta_\ell\) yields
\begin{align}
\Delta_\ell
&=
\frac{
\frac12
\sum_{v\in \mathcal{V}}
\left|
q(v)-\frac{p(v)}{\ell}
\right|
+
\frac12-\frac{1}{2\ell}
}{
\frac12
\sum_{v\in \mathcal{V}}
\left|
q(v)-p(v)
\right|
}.
\end{align}
Thus,
\begin{align}
\boxed{P(\text{generate} \, x)=
\begin{cases}
\Delta_\ell p(x)+(1-\Delta_\ell)q(x),
& q(x)\le p(x),\\[3pt]
q(x),
& p(x)<q(x)\le p(x)/\ell,\\[3pt]
p(x)/\ell,
& q(x)\ge p(x)/\ell.
\end{cases}
}
\end{align}

\section{Lenience-based Relaxation Interpretation}
\label{sec: further_interpretation}
\begin{figure*}[t]
    \centering
    \setlength{\abovecaptionskip}{4pt}
    \setlength{\belowcaptionskip}{0pt}
    \begin{subfigure}{0.45\textwidth}
        \includegraphics[width=\linewidth]{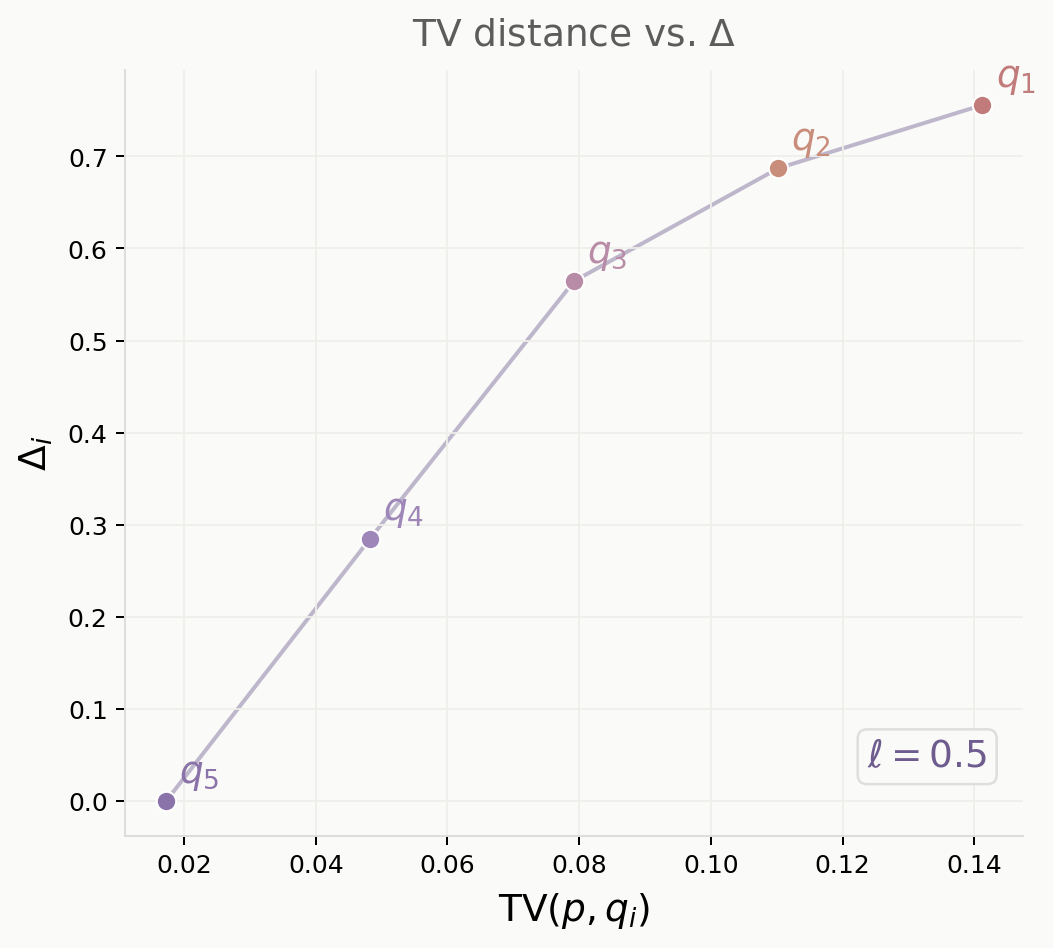}
        \caption{Adaptive interpolation.}
        \label{fig:Adaptive}
    \end{subfigure}
    \hfill
    \begin{subfigure}{0.45\textwidth}
        \includegraphics[width=\linewidth]{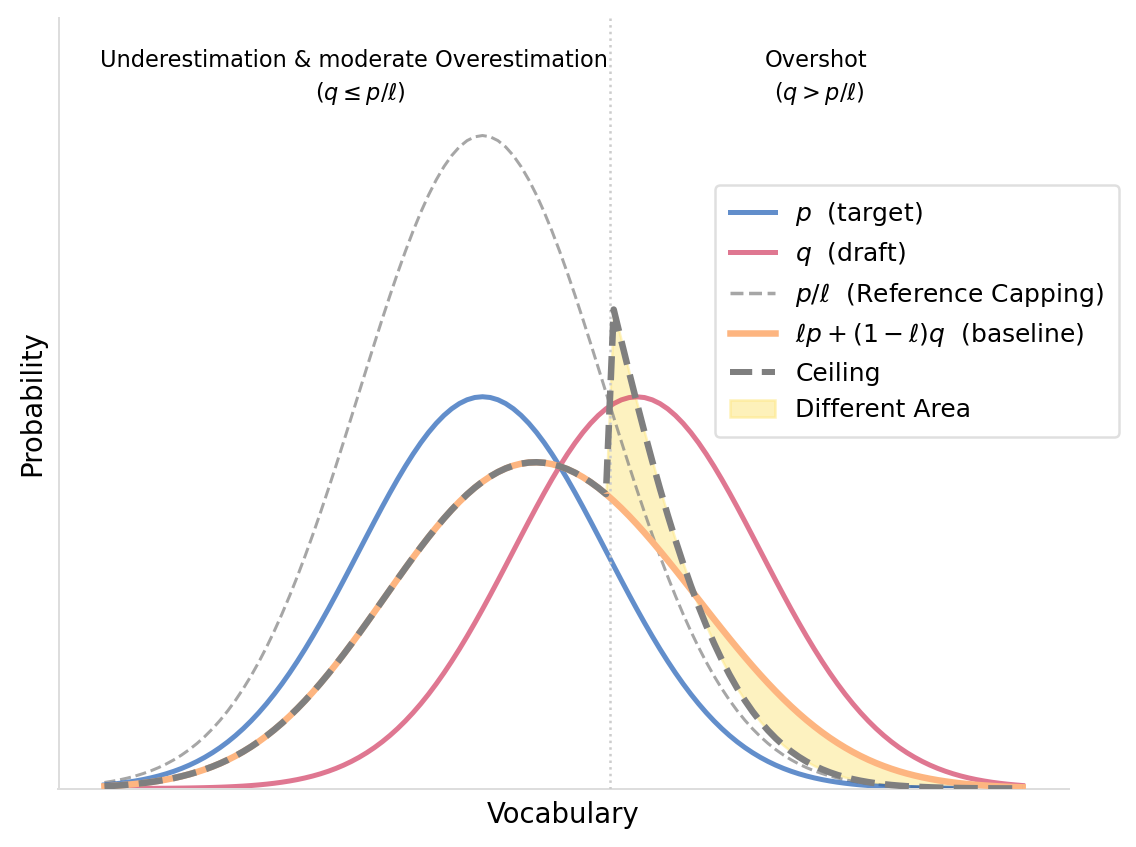}
        \caption{Overshoot ceiling.}
        \label{fig:ceiling}
    \end{subfigure}
    \caption{Two views of lenience-based relaxation.
    Fig.\subref{fig:Adaptive} illustrates how the interpolation strength
    adapts to the draft--target divergence on a per-token basis.
    Fig.\subref{fig:ceiling} contrasts a global linear mixture with a
    pointwise ceiling at $p/\ell$, highlighting the asymmetric treatment
    of overshoot.}
    \label{fig:lenience_two}
\end{figure*}

Lenience-based relaxation trades a controlled amount of distributional
distortion for a higher acceptance rate, governed by the lenience
parameter~$\ell$. ~\Cref{fig:lenience_two} examines two facets of
this trade-off: \emph{how much} relaxation to apply, and \emph{where} it
takes effect.

\paragraph{Adaptive interpolation.}
\Cref{fig:lenience_two}\subref{fig:Adaptive} plots the interpolation strength $\Delta_i$
against the draft--target divergence $\mathrm{TV}(p, q_i)$. Rather than
applying a fixed amount of leniency, the relaxation adapts to local
disagreement: when the draft already agrees with the target
($q_5$, small $\mathrm{TV}$) we have $\Delta_i \approx 0$ and almost no
relaxation is applied, whereas a more divergent draft ($q_1$) induces a
larger $\Delta_i$. The concave profile further shows that the additional
leniency saturates rather than growing without bound as divergence
increases, so relaxation is spent precisely where draft and target
diverge and withheld where they coincide.

\paragraph{Overshoot ceiling.}
\Cref{fig:lenience_two}\subref{fig:ceiling} contrasts a global linear mixture
$\ell p + (1-\ell)q$, which blends draft and target uniformly across the
vocabulary, with a pointwise ceiling at $p/\ell$ (reference capping),
which accepts the draft wherever it stays below the ceiling and clips it
only where it exceeds it. Two regimes emerge: for $q \le p/\ell$ the
draft lies under the ceiling (underestimation or moderate overestimation)
and is accepted, while for $q > p/\ell$ the draft \emph{overshoots} the
relaxed target and is capped. The shaded region marks exactly where the
two schemes disagree---the overshoot mass that the ceiling clips but the
linear baseline retains---highlighting the asymmetric treatment of
overshoot.

\section{Experimental Setup}
\label{sec:exp_setup}

\paragraph{Models.}
We evaluate tokenwise speculative decoding using two target--draft model pairs. For the Qwen2.5 experiments in Table~\ref{tab:verification_result_qw25}, we use the instruction-tuned, 8-bit GPTQ-quantized~\citep{frantar2022gptq} models
\texttt{Qwen/Qwen2.5-72B-Instruct-GPTQ-Int8} and
\texttt{Qwen/Qwen2.5-0.5B-Instruct-GPTQ-Int8}
as the target and draft models, respectively~\citep{qwen2.5}.
For the LLaMA experiments in Table~\ref{tab:verification_result_llama31}, we use
\texttt{meta-llama/Llama-3.1-8B-Instruct} as the target model and
\texttt{meta-llama/Llama-3.2-1B-Instruct} as the draft model.

For the EAGLE-3 experiments~\citep{li2025eagle}, we consider two target--draft configurations. Table~\ref{tab:eagle_llama31} uses
\texttt{meta-llama/Llama-3.1-8B-Instruct} as the target model and
\texttt{yuhuili/EAGLE3-LLaMA3.1-Instruct-8B} as the corresponding EAGLE-3 draft model. Table~\ref{tab:eagle_qwen3} uses
\texttt{Qwen/Qwen3-8B} as the target model and
\texttt{Tengyunw/qwen3\_8b\_eagle3} as the EAGLE-3 draft model.

\begin{table*}[t]
\centering
\small
\caption{
Parameter counts of the target and draft models used in the experiments.
For EAGLE-3, the draft-model count denotes the additional parameters
of the EAGLE-3 draft head.
}
\label{tab:model_parameters}
\setlength{\tabcolsep}{5pt}
\renewcommand{\arraystretch}{1.15}
\begin{adjustbox}{max width=\textwidth}
\begin{tabular}{llllc}
\toprule
\textbf{Decoding Framework}
& \textbf{Model Pair}
& \textbf{Role}
& \textbf{Model}
& \textbf{Parameters} \\
\midrule

\multirow{4}{*}{Tokenwise SD}
& \multirow{2}{*}{Qwen2.5}
& Target
& \texttt{Qwen/Qwen2.5-72B-Instruct-GPTQ-Int8}
& 72.7B \\
&
& Draft
& \texttt{Qwen/Qwen2.5-0.5B-Instruct-GPTQ-Int8}
& 0.49B \\
\cmidrule(lr){2-5}
& \multirow{2}{*}{LLaMA}
& Target
& \texttt{meta-llama/Llama-3.1-8B-Instruct}
& 8B \\
&
& Draft
& \texttt{meta-llama/Llama-3.2-1B-Instruct}
& 1.23B \\

\midrule

\multirow{4}{*}{EAGLE-3}
& \multirow{2}{*}{LLaMA}
& Target
& \texttt{meta-llama/Llama-3.1-8B-Instruct}
& 8B \\
&
& Draft
& \texttt{yuhuili/EAGLE3-LLaMA3.1-Instruct-8B}
& $\sim$0.4B \\
\cmidrule(lr){2-5}
& \multirow{2}{*}{Qwen3}
& Target
& \texttt{Qwen/Qwen3-8B}
& 8.2B \\
&
& Draft
& \texttt{Tengyunw/qwen3\_8b\_eagle3}
& $\sim$0.4B \\

\bottomrule
\end{tabular}
\end{adjustbox}
\end{table*}

\paragraph{Artifact Licensing.}
We release only our source code and experimental scripts. Our implementation builds on publicly available open-source models, which remain subject to their original licenses and are not redistributed as part of our release. Users are required to obtain the corresponding model weights from their original repositories and comply with the associated license terms. We preserve all required attribution and license notices for reused components, and release only code that we have the right to distribute.

\paragraph{Computation hours.} For the Qwen2.5 experiments in Table~\ref{tab:verification_result_qw25}, it takes $\sim 5000$ A100 hours. For the LLaMA experiments in Table~\ref{tab:verification_result_llama31}, it takes $\sim 200$ A6000 hours. For the EAGLE-3 experiments in Table~\ref{tab:eagle_llama31}, it takes $\sim 100$ A6000 hours. For the EAGLE-3 experiments in Table~\ref{tab:eagle_qwen3}, it takes $\sim 100$ A6000 hours. 

\paragraph{Benchmarks and Metrics.}
We evaluate all methods on four benchmarks covering complementary generation capabilities: MATH~\cite{hendrycks2021measuring} for mathematical reasoning, MBPP+~\cite{liu2023your} for code generation, INCLUDE for multilingual knowledge, and BFCL~\citep{berkeley-function-calling-leaderboard} for tool use. We use the first 500 examples from MATH~\cite{hendrycks2021measuring}, the full MBPP+~\cite{liu2023your} test set containing 378 examples, five examples per language from INCLUDE~\citep{romanou2024include} for a total of 220 examples, and 200 calls from the \texttt{parallel\_multiple} split of BFCL-v3~\citep{berkeley-function-calling-leaderboard}. Generation quality is measured using accuracy for MATH, INCLUDE, and BFCL, and Pass@1 for MBPP+. Efficiency is measured using block efficiency (BE) and decoding speed (DS), where DS is reported in generated tokens per second.

\paragraph{Decoding Configuration.}
The EAGLE-3 experiments in Tables~\ref{tab:eagle_llama31} and~\ref{tab:eagle_qwen3} use a sampling temperature of $T=0.7$ and a block size of $7$. For each relaxation method, we sweep the corresponding verification parameter over the values reported in the tables. In the overshoot-capping ablation in Table~\ref{tab:overshoot_alone}, results are averaged over $\ell\in{0.2,0.4,0.6,0.8}$, with $p_{\mathrm{base}}=0.5$ used to define the truncation set $\mathcal{A}_{\Theta}$.

\paragraph{Hardware Platform.}
The experiments in Figure~\ref{fig:difficulty_trend} are conducted on a single NVIDIA H200 GPU with 140GB of memory. The EAGLE-3 experiments reported in Tables~\ref{tab:eagle_llama31} and~\ref{tab:eagle_qwen3} are conducted on a single NVIDIA RTX A6000 GPU with 48\,GB of memory. All remaining verification and ablation experiments are conducted on two NVIDIA A100 GPUs, each with 80\,GB of memory.

\section{Extended Analysis of the Collaborative Verification Trade-off}
\label{app:trade_off_extended}

\Cref{fig:trade_qwen} reports the Qwen2.5-72B/0.5B pair; \Cref{fig:trade_llama}
repeats the comparison with LLaMA-3.1-8B/3.2-1B. Each curve sweeps the relaxation
hyperparameter of one method (the lenience factor $\ell$, the interpolation weight $\lambda$
of CoS, and the judge threshold $\delta$ of AutoJudge). We organize the discussion around three observations.

\begin{figure*}[t]
    \centering
    \begin{subfigure}{0.9\textwidth}
        \centering
        \includegraphics[width=\textwidth]{lenience_cos_judge_tradeoff.pdf}
        \caption{}
        \label{fig:trade_qwen}
    \end{subfigure}
    \begin{subfigure}{0.9\textwidth}
        \centering
        \includegraphics[width=\textwidth]{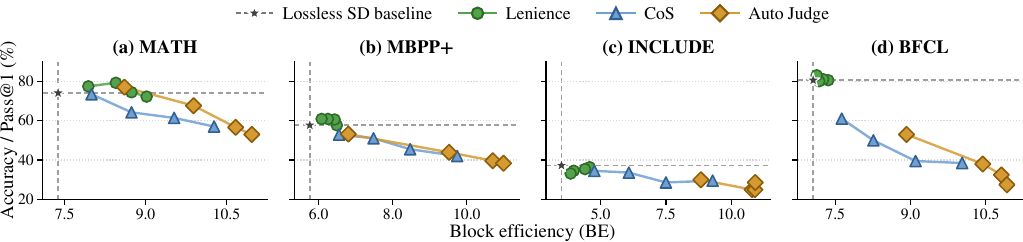}
        \caption{}
        \label{fig:trade_llama}
    \end{subfigure}

    \caption{
        Efficiency and task performance tradeoff for collaborative verification
        across four benchmarks.
        \Cref{fig:trade_qwen} uses Qwen2.5-72B-Instruct-GPTQ-Int8 as the target model
        and Qwen2.5-0.5B-Instruct-GPTQ-Int8 as the draft model, while
        \Cref{fig:trade_llama} uses Llama-3.1-8B-Instruct as the target model
        and Llama-3.2-1B-Instruct as the draft model.
    }
    \label{fig:trade_total}
\end{figure*}

\paragraph{Lenience preserves task performance across both model pairs.}
Across all four benchmarks and both model pairs, lenience stays within noise of the
lossless baseline in accuracy, in several cases marginally above it, while moving
block efficiency by only a small margin (roughly $+0.2$ to $+1.5$ BE). Its points are
also tightly clustered: sweeping $\ell$ over its usable range does not spread them out
the way $\lambda$ and the judge threshold $\delta$ spread the other two methods. Both
properties trace to the same source. The MBPP+ ablation in
\Cref{tab:ablation_yield_rule} isolates it: removing the overshoot ceiling while
retaining the adaptive interpolation coefficient recovers CoS-like degradation,
whereas retaining the ceiling alone preserves the flat segment. Capping the yield of
an overshooting token bounds the pointwise deviation from $p$, which is what protects
accuracy, and the same cap limits how much acceptance mass can be recovered, which is
why the efficiency gain is modest.

\paragraph{CoS degrades smoothly and monotonically with block efficiency.}
CoS traces a near-linear descent on every benchmark. This follows directly from
\Cref{eq:we_yield}: the induced distribution is a convex combination of $p$ and $q$
with a coefficient that grows with $\lambda$ uniformly over the vocabulary, so the
generated distribution drifts toward the draft at a rate set by a single free
scalar. Nothing in the rule distinguishes tokens whose substitution is harmless from
tokens that change the answer, and the empirical curves inherit that uniformity: the
trade-off is predictable, tunable, and strictly unfavorable.

\paragraph{The judge's apparent advantage is a domain effect, not an acceptance-rule effect.}
On MATH, AutoJudge dominates CoS at matched block efficiency for both model pairs, and
with the Qwen pair it sits \emph{above} the lossless baseline over a wide BE range.
On the remaining three benchmarks the two curves largely interleave, and both fall
well below the baseline once relaxation becomes aggressive. The size of the MATH gain
is also model-dependent: under the Qwen pair the judge clears the lossless baseline,
while under LLaMA-3.1-8B/3.2-1B the same sweep tracks at or slightly below it, though
still above CoS. On BFCL with the Qwen pair the sweep is non-monotone, with at least
one operating point scoring well below a neighbor at comparable block efficiency.

We attribute this pattern to the judge's supervision rather than to its acceptance
rule. Our judge is a single head trained on the union of mined GSM8K and
LiveCodeBench mismatches (\Cref{app:autojudge-implementation}), with GSM8K
contributing roughly ten times as many source problems, so the mixture is dominated
by mathematical reasoning. MATH is the one benchmark that matches the dominant
component of that mixture, and it is the one benchmark where the judge separates from
CoS. The residual model dependence is consistent with the mined labels being derived
from Qwen2.5 target completions, which aligns the judge with one of the two verified
models but not the other. The non-monotonicity on BFCL is consistent with $\delta$
thresholding a learned score whose calibration shifts across tasks, unlike $\ell$ and
$\lambda$, whose effects on the induced distribution are fixed in closed form by
\Cref{eq:lenience_yield,eq:we_yield}.

\section{Detailed Analysis in Overshoot Analysis in Lenience-based Relaxation}

The ablation in~\Cref{tab:overshoot_alone} supports our claim that adaptive interpolation is not the primary source of improvement in lenience-based relaxation: removing the adaptive interpolation while retaining the $p/\ell$ cap preserves most of the benefit. The overshoot region is precisely where the target and draft distributions disagree most strongly, and the draft tends to over-allocate probability to low-quality tokens there. Capping at $p/\ell$ in this region directly suppresses this failure mode.

We further investigate the overlap between the overshoot region and the set of tokens discarded by min-$p$ sampling, motivated by the intuition that accepting more tokens from the discarded set can improve efficiency. This suggests combining the two mechanisms: use the min-$p$ allowed set $\mathcal{A}_\Theta$ to gate acceptance, and apply the $p/\ell$ cap to control overshoot outside it. Concretely, tokens in $\mathcal{A}_\Theta$ are yielded with draft probability $q(x)$, while tokens outside are yielded with $\min\{q(x),\, p(x)/\ell\}$, so the rule relies on the draft distribution except in the overlap between the overshoot region and the discarded set. As shown in the last row of~\Cref{tab:overshoot_alone}, this rule delivers a $3.7\%$ gain in BE over lossless SD while matching its Pass@1 exactly, outperforming both SpecCascade and lenience-based relaxation. The discarded set thus appears to be a promising target for further acceleration at negligible quality cost.

\begin{table*}[t]
\centering
\footnotesize
\setlength{\tabcolsep}{5pt}
\renewcommand{\arraystretch}{1.2}
\caption{Effect of truncation and overshoot capping on MBPP+, averaged over $\ell \in \{0.2,0.4,0.6,0.8\}$. $p_\text{base}=0.5$ is used in $\mathcal{A}_\Theta$.}
\label{tab:overshoot_alone}
\begin{tabular}{p{9.0cm} c c}
\toprule
\textbf{Yielded rule} & \textbf{Avg. BE} & \textbf{Avg. Pass@1 (\%)} \\
\midrule
$\displaystyle
P(\text{generate}\,x)=p(x)
$ 
& 5.42 & 75.66 \\
\midrule
$\displaystyle
P(\text{generate}\,x)=
\begin{cases}
q(x), & x \in \mathcal{A}_\Theta,\\
0, & \text{otherwise},
\end{cases}
$ 
& 5.51 (+1.7\%) & 74.87 (-1.0\%) \\
\midrule
$\displaystyle
P(\text{generate}\,x)=
\begin{cases}
\Delta\, p(x) + (1-\Delta)\, q(x), & q(x)\le p(x),\\[2pt]
q(x), & p(x) < q(x) \le p(x)/\ell,\\[2pt]
p(x)/\ell, & q(x)\ge p(x)/\ell,
\end{cases}
$
& 5.53 (+2.0\%) & 75.22 (-0.6\%) \\
\midrule

$\displaystyle
P(\text{generate}\,x)=
\begin{cases}
q(x), & q(x)\le p(x)/\ell,\\[2pt]
p(x)/\ell, & q(x)\ge p(x)/\ell,
\end{cases}
$
& 5.58 (+3.0\%) & 75.33 (-0.4\%) \\
\midrule
$\displaystyle
P(\text{generate}\,x)=
\begin{cases}
q(x), & x \in \mathcal{A}_\Theta,\\
\min\{q(x),p(x)/\ell\},  & \text{otherwise}
\end{cases}
$
& \textbf{5.62 (+3.7\%)} & \textbf{75.66 (-0.0\%)} \\
\bottomrule
\end{tabular}
\end{table*}

\section{Extended Analysis for Judge-Based Relaxation}

\subsection{Judge-Based Relaxation Derivation}
\label{sec:judge_derivation}

Judge-based relaxation augments standard speculative verification with a
binary judgment signal $j_{\phi}(x)\in\{0,1\}$. When $q(x)>p(x)$, standard
speculative verification accepts the draft token with probability
$p(x)/q(x)$. If standard verification would reject the token, the judge
retains it whenever $j_{\phi}(x)=1$. Therefore, the effective acceptance
probability is
\begin{equation}
h(x)
=
\begin{cases}
1,
& q(x)\le p(x),\\[4pt]
\dfrac{p(x)}{q(x)}
+
\left(1-\dfrac{p(x)}{q(x)}\right)j_{\phi}(x),
& q(x)>p(x).
\end{cases}
\label{eq:judge_acceptance}
\end{equation}

As in standard speculative sampling, a rejected proposal is resampled
from the residual distribution
\begin{equation}
r(x)
=
\frac{
\bigl(p(x)-q(x)\bigr)_+
}{
\sum_{v\in\mathcal V}
\bigl(p(v)-q(v)\bigr)_+
}.
\label{eq:judge_residual}
\end{equation}

The resulting probability of generating token $x$ is
\begin{align}
P(\mathrm{generate}\,x)
&=
P(\mathrm{draft\ and\ accept}\,x)
+
P(\mathrm{reject})\,r(x)
\nonumber\\
&=
q(x)h(x)
+
\left[
\sum_{v\in\mathcal V}
q(v)\bigl(1-h(v)\bigr)
\right]
r(x).
\label{eq:judge_general_yield}
\end{align}

We first simplify the total rejection probability. For tokens satisfying
$q(v)\le p(v)$, we have $h(v)=1$, and hence they contribute no rejected
mass. For $q(v)>p(v)$,
\begin{align}
q(v)\bigl(1-h(v)\bigr)
&=
q(v)
\left[
1-\frac{p(v)}{q(v)}
-\left(1-\frac{p(v)}{q(v)}\right)j(v)
\right]
\nonumber\\
&=
\bigl(q(v)-p(v)\bigr)\bigl(1-j(v)\bigr).
\end{align}

Therefore,
\begin{equation}
R_j
\coloneqq
P(\mathrm{reject})
=
\sum_{v\in\mathcal V}
\bigl(q(v)-p(v)\bigr)_+
\bigl(1-j(v)\bigr).
\label{eq:judge_rejection_mass}
\end{equation}

Let
\begin{equation}
R
=
\sum_{v\in\mathcal V}
\bigl(p(v)-q(v)\bigr)_+.
\label{eq:standard_rejection_mass}
\end{equation}

Since $p$ and $q$ are normalized distributions,
\begin{equation}
\sum_{v\in\mathcal V}
\bigl(q(v)-p(v)\bigr)_+
=
\sum_{v\in\mathcal V}
\bigl(p(v)-q(v)\bigr)_+
=
R.
\label{eq:positive_mass_equality}
\end{equation}

Define
\begin{equation}
\Delta_j
=
\frac{R_j}{R}
=
\frac{
\sum_{v\in\mathcal V}
\bigl(q(v)-p(v)\bigr)_+
\bigl(1-j(v)\bigr)
}{
\sum_{v\in\mathcal V}
\bigl(p(v)-q(v)\bigr)_+
}.
\label{eq:judge_beta}
\end{equation}

Because $j(v)\in\{0,1\}$,
\begin{equation}
0
\le
\bigl(1-j(v)\bigr)
\le
1,
\end{equation}
which implies
\begin{equation}
0\le R_j\le R
\qquad\Longrightarrow\qquad
\beta\in[0,1].
\label{eq:judge_beta_range}
\end{equation}

We now derive the resulting distribution in the two regions.

\paragraph{Draft-underestimation region: $q(x)\le p(x)$.}
In this region, the draft token is always accepted, so
\begin{equation}
q(x)h(x)=q(x).
\end{equation}
Moreover,
\begin{equation}
r(x)
=
\frac{p(x)-q(x)}{R}.
\end{equation}
Substituting into \Cref{eq:judge_general_yield} gives
\begin{align}
P(\mathrm{generate}\,x)
&=
q(x)
+
R_j
\frac{p(x)-q(x)}{R}
\nonumber\\
&=
q(x)+\Delta_j\bigl(p(x)-q(x)\bigr)
\nonumber\\
&=
\Delta_j p(x)
+
(1-\Delta_j)q(x).
\label{eq:judge_underestimate_yield}
\end{align}

\paragraph{Draft-overshoot region: $q(x)>p(x)$.}
In this region, the residual distribution assigns zero probability to
$x$, since
\begin{equation}
\bigl(p(x)-q(x)\bigr)_+=0.
\end{equation}
Thus, the generated probability consists only of accepted draft mass:
\begin{align}
P(\mathrm{generate}\,x)
&=
q(x)h(x)
\nonumber\\
&=
q(x)
\left[
\frac{p(x)}{q(x)}
+
\left(1-\frac{p(x)}{q(x)}\right)j_{\phi}(x)
\right]
\nonumber\\
&=
p(x)+\bigl(q(x)-p(x)\bigr)j_{\phi}(x)
\nonumber\\
&=
\bigl(1-j_{\phi}(x)\bigr)p(x)
+
j_{\phi}(x)q(x).
\label{eq:judge_overshoot_yield}
\end{align}

Combining the two regions, the induced distribution is
\begin{equation}
\begin{gathered}
P(\mathrm{generate}\,x)=
\left\{
\begin{array}{@{}l@{\,}l@{}}
\Delta_j p(x) + \bigl(1-\Delta_j\bigr)q(x),
    \,\, q(x)\le p(x),\\[3pt]
p(x),
    \,\,\,\,\, q(x)>p(x)\ \text{and}\ j_{\phi}(x)=0,\\[3pt]
q(x), \,\,\,\,\,\,
     q(x)>p(x)\ \text{and}\ j_{\phi}(x)=1.
\end{array}
\right.
\end{gathered}
\label{eq:judge_yield}
\end{equation}
where $\Delta_j$ is given by \Cref{eq:judge_beta}.

Although all three methods introduce a learned judge to accept
target--draft mismatches, they differ primarily in how the judge
supervision is obtained. Judge Decoding~\citep{bachmann2025judge}
constructs a manually curated training set to teach a lightweight
verifier whether a draft continuation remains valid despite differing
from the target output. AutoJudge~\citep{garipov2025autojudge} removes
the need for manual annotation by automatically identifying
task-critical mismatches through counterfactual correction and
downstream outcome evaluation, but consequently relies on tasks with
verifiable quality signals. Thus, both methods share the same
judge-based relaxation mechanism at inference time, while progressing
from manually supervised, to outcome-supervised, to target-derived
self-supervised judge training.

\subsection{Judge-based Relaxation Implementation}
\label{app:autojudge-implementation}

We instantiate judge-based relaxation with AutoJudge~\citep{garipov2025autojudge},
using the authors' public implementation.\footnote{\url{https://github.com/garipovroma/autojudge}} Both models are frozen: \texttt{Qwen2.5-72B-Instruct-GPTQ-Int8} as the target model and \texttt{Qwen2.5-0.5B-Instruct-GPTQ-Int8} as the draft model.

AutoJudge labels a draft/target mismatch as \emph{unimportant} if substituting the
draft token and letting the target model complete the response preserves the final
answer, and as \emph{important} otherwise. The search is semi-greedy: it proceeds
from the earliest mismatch, keeps accepted substitutions in the running prefix, and
re-derives the remaining mismatches after each accepted swap, so that labels reflect
the joint effect of substitutions rather than each token in isolation. We run this
procedure on the 7{,}473-problem GSM8K~\citep{cobbe2021gsm8k} training split, with
answer equivalence defined by the extracted final number, and on the 880 problems of
the LiveCodeBench~\citep{jain2024livecodebench} \texttt{code\_generation\_lite}
\texttt{release\_v5} split, with equivalence defined by the benchmark's test suite. Unlike the original setup, which trains one classifier per task, we train a \emph{single} judge on the union of both sources, since our evaluation spans four benchmarks rather than the one the judge was mined on.

\subsection{Extended Analysis of Judge-based Verification}
\label{app:autojudge-domain}

\Cref{tab:mix_sweep_q_all} sweeps the composition of the judge's mining data
(GSM8K:LiveCodeBench, from math-dominated $9{:}1$ to code-dominated $1{:}9$)
against the acceptance threshold.

\paragraph{Source quality dominates the efficiency-performance tradeoff.}
Degradation from $9{:}1$ to $1{:}9$ at $ 0.8$ orders as MATH ($-14.0$),
MBPP+ ($-29.6$), BFCL ($-32.0$), INCLUDE ($-40.0$), consistent with proximity to
GSM8K, the source that survives at high math ratios.

\begin{table*}[t]
  \centering
    \caption{Mixture-ratio sweep, Qwen2.5-72B / 0.5B, thresholds matched at the
    20th/40th/60th/80th percentile of each head's validation scores. Single run per cell.}
  \label{tab:mix_sweep_q_all}
  \resizebox{\linewidth}{!}{%
  \begin{tabular}{l|c|cc|cc|cc|cc}
    \toprule
    \multirow{2}{*}{\textbf{Mix}}
      & \multirow{2}{*}{\textbf{Param}}
      & \multicolumn{2}{c|}{\textbf{MATH}}
      & \multicolumn{2}{c|}{\textbf{MBPP+}}
      & \multicolumn{2}{c|}{\textbf{INCLUDE}}
      & \multicolumn{2}{c}{\textbf{BFCL}}
      \\
    \cmidrule(lr){3-4}\cmidrule(lr){5-6}\cmidrule(lr){7-8}\cmidrule(lr){9-10}
      & & BE & Acc\,(\%)
      & BE & Pass@1\,(\%)
      & BE & Acc\,(\%)
      & BE & Acc\,(\%) \\
    \midrule
    \multirow{4}{*}{9:1} & $ 0.2$
            & $8.59$ & $84.20$
            & $5.81$ & $76.72$
            & $3.76$ & $55.91$
            & $8.88$ & $88.00$ \\
     & $ 0.4$
            & $8.67$ & $82.40$
            & $5.81$ & $76.72$
            & $3.89$ & $65.91$
            & $8.94$ & $88.50$ \\
     & $ 0.6$
            & $8.47$ & $83.80$
            & $5.81$ & $76.72$
            & $3.93$ & $60.00$
            & $8.84$ & $87.50$ \\
     & $ 0.8$
            & $8.76$ & $82.00$
            & $5.83$ & $76.72$
            & $4.00$ & $70.00$
            & $8.88$ & $88.00$ \\
    \midrule
    \multirow{4}{*}{7:3} & $ 0.2$
            & $8.51$ & $82.60$
            & $5.81$ & $76.72$
            & $3.90$ & $70.00$
            & $8.78$ & $88.50$ \\
     & $ 0.4$
            & $9.04$ & $82.40$
            & $5.82$ & $76.19$
            & $4.56$ & $70.00$
            & $8.94$ & $87.00$ \\
     & $ 0.6$
            & $10.17$ & $68.20$
            & $6.52$ & $70.90$
            & $7.33$ & $48.18$
            & $9.75$ & $82.00$ \\
     & $ 0.8$
            & $10.97$ & $62.60$
            & $8.46$ & $55.56$
            & $10.22$ & $48.18$
            & $10.09$ & $52.00$ \\
    \midrule
    \multirow{4}{*}{5:5} & $ 0.2$
            & $8.95$ & $78.40$
            & $6.20$ & $69.31$
            & $5.73$ & $54.09$
            & $8.70$ & $88.00$ \\
     & $ 0.4$
            & $9.31$ & $74.40$
            & $6.93$ & $61.90$
            & $6.29$ & $45.91$
            & $9.14$ & $84.00$ \\
     & $ 0.6$
            & $9.98$ & $76.60$
            & $7.83$ & $58.20$
            & $7.73$ & $40.00$
            & $9.97$ & $70.00$ \\
     & $ 0.8$
            & $10.45$ & $68.80$
            & $9.42$ & $51.06$
            & $8.74$ & $50.00$
            & $10.30$ & $66.00$ \\
    \midrule
    \multirow{4}{*}{3:7} & $ 0.2$
            & $8.58$ & $80.40$
            & $5.86$ & $76.46$
            & $4.10$ & $61.82$
            & $8.86$ & $88.00$ \\
     & $ 0.4$
            & $9.22$ & $82.60$
            & $5.93$ & $77.78$
            & $5.46$ & $60.00$
            & $9.69$ & $88.00$ \\
     & $ 0.6$
            & $10.08$ & $78.40$
            & $6.58$ & $72.22$
            & $7.85$ & $51.82$
            & $9.44$ & $84.00$ \\
     & $ 0.8$
            & $10.94$ & $52.20$
            & $8.26$ & $60.05$
            & $10.49$ & $30.00$
            & $9.98$ & $70.00$ \\
    \midrule
    \multirow{4}{*}{1:9} & $ 0.2$
            & $9.73$ & $72.60$
            & $6.28$ & $70.90$
            & $6.16$ & $44.09$
            & $8.70$ & $76.00$ \\
     & $ 0.4$
            & $10.57$ & $64.40$
            & $7.63$ & $60.32$
            & $8.77$ & $58.18$
            & $9.96$ & $62.00$ \\
     & $ 0.6$
            & $10.90$ & $52.80$
            & $8.94$ & $53.17$
            & $10.05$ & $41.82$
            & $10.42$ & $58.00$ \\
     & $ 0.8$
            & $10.99$ & $68.00$
            & $10.51$ & $47.09$
            & $10.57$ & $30.00$
            & $10.61$ & $56.00$ \\
    \bottomrule[1.3pt]
  \end{tabular}}
\end{table*}

\section{Block Efficiency and Distributional Gap under Tree Verification}
\label{app:be-tree}

This section proves \Cref{lem:trunc-be} (\Cref{app:proof-lemma-be}), and proves the distributional gap of \Cref{thm:kl-gap} and the block efficiency of Proposition~\ref{prop:be-tree} through a per-position analysis (Lemma~\ref{lem:node}, Corollary~\ref{cor:multidraft-gap}). Throughout, $p$, $q$, $p_\Theta$, $z$, $Z_\Theta$, and $\mathcal{A}_\Theta$ refer to the current decoding step and are conditioned on all previously generated tokens, as in \Cref{sec:truncation_sampling}.

\subsection{Proof of Truncation sampling efficiency effect}
\label{app:proof-lemma-be}

\begin{proof}
Fix a decoding step and condition on the prefix. Under standard speculative
decoding with draft $q$ and reference distribution $\tilde p$, the single-token
acceptance probability is
\begin{equation}
A(\tilde p, q)
= \sum_{x \in \mathcal{V}} q(x) \min\Big\{1, \frac{\tilde p(x)}{q(x)}\Big\}
= \sum_{x \in \mathcal{V}} \min\big(\tilde p(x), q(x)\big).
\label{eq:acc-prob}
\end{equation}
Without truncation $\tilde p = p$; with truncation strategy $\Theta$ applied to
the target, $\tilde p = p_\Theta$, i.e.
$p_\Theta(x) = p(x)/z$ for $x \in \mathcal{A}_\Theta$ and $0$ otherwise, where
$z = \sum_{v \in \mathcal{A}_\Theta} p(v)$. The draft distribution is
unmodified. Hence
\begin{equation}
\Delta\mathrm{BE}
= A(p_\Theta, q) - A(p, q)
= \sum_{x \in \mathcal{A}_\Theta}
  \Big[ \min\big(\tfrac{p(x)}{z}, q(x)\big) - \min\big(p(x), q(x)\big) \Big]
- \sum_{x \notin \mathcal{A}_\Theta} \min\big(p(x), q(x)\big),
\label{eq:be-split}
\end{equation}
since $p_\Theta$ vanishes off $\mathcal{A}_\Theta$.

It remains to evaluate the bracketed term for $x \in \mathcal{A}_\Theta$.
Because $z \leq 1$ we have $p(x)/z \geq p(x)$, and there are two cases.

\paragraph{Case $q(x) \leq p(x)$.}
Then $\min(p(x), q(x)) = q(x)$ and $\min(p(x)/z, q(x)) = q(x)$, so the
bracketed term is $0$.

\paragraph{Case $q(x) > p(x)$.}
Then $\min(p(x), q(x)) = p(x)$, and
\begin{equation}
\min\Big(\frac{p(x)}{z}, q(x)\Big) - p(x)
= \min\Big(\frac{p(x)}{z} - p(x),\, q(x) - p(x)\Big)
= \min\Big(\Big(\frac{1}{z} - 1\Big) p(x),\, q(x) - p(x)\Big).
\end{equation}

Both cases are covered by
$\min\{(q(x) - p(x))_+,\, (\tfrac{1}{z} - 1) p(x)\}$, which vanishes when
$q(x) \leq p(x)$. Substituting into Equation~\eqref{eq:be-split} gives
\begin{equation}
\Delta\mathrm{BE}
= \sum_{x \in \mathcal{A}_\Theta}
  \min\Big\{ \big(q(x) - p(x)\big)_+,\, \Big(\frac{1}{z} - 1\Big) p(x) \Big\}
- \sum_{x \notin \mathcal{A}_\Theta} \min\big(p(x), q(x)\big),
\end{equation}
as claimed.
\end{proof}

\subsection{Tree verification as implemented}
\label{app:tree-impl}
EAGLE-3 drafts a token tree and verification accepts a single root-to-leaf path~\citep{li2025eagle}. Greedy drafting fixes a deterministic candidate set $x^{(1)},\dots,x^{(D)}$ at each position ($D$ fixed by the drafting configuration), examined in this order, with draft probabilities playing no role in verification. With a truncation warper active the verifier scores candidates by the truncated target $p_\Theta$ (\Cref{eq:trunc_verif_pyield}), accepting each iff $r\le h(x^{(i)})$, $r\sim\mathrm{Unif}[0,1]$:
\begin{itemize}[leftmargin=1.6em]
\item \textbf{Truncation sampling + SD (lossless).}
\begin{equation}
\label{eq:h-lossless}
h(x)=\min\!\Bigl\{p_\Theta(x)/q,\,1\Bigr\}\big|_{q=1}=p_\Theta(x),
\end{equation}
with zero-and-renormalize on rejection, reproducing $p_\Theta$ at each position (Lemma~\ref{lem:node}(i)).
\item \textbf{SpecCascade.}
\begin{equation}
\label{eq:h-cascade}
\begin{gathered}
h(x)=\mathbf 1\!\left[p_\Theta(x)\ge p_{\text{base}}\,\max_{v}p_\Theta(v)\right]
=\mathbf 1[x\in\mathcal{A}_{\text{min-}p}].
\end{gathered}
\end{equation}
\item \textbf{Typical acceptance.}
\begin{equation}
\label{eq:h-typical}
\begin{gathered}
h(x)=\min\!\Bigl\{p_\Theta(x)/\tau,\,1\Bigr\}=\mathbf 1[x\in\mathcal{A}_\eta],\qquad
\tau=\min\!\bigl(\varepsilon,\,\sqrt\varepsilon\,e^{-H(p_\Theta)}\bigr).
\end{gathered}
\end{equation}
\end{itemize}
Equations~\ref{eq:h-cascade} and~\ref{eq:h-typical} are the indicator of Definition~\ref{def:truncation-verification} with $\Theta=$ min-$p$ (\Cref{eq:min-p}) and $\Theta=\eta$ (\Cref{eq:eta}, $\delta=\sqrt\varepsilon$): off-set candidates are rejected and in-set candidates accepted with probability one. If a candidate is accepted the walk descends into its subtree; otherwise the path terminates with a bonus token from $p_\Theta$. The accepted length $L$ counts accepted drafted tokens only (the bonus excluded) and equals the block efficiency in \Cref{tab:eagle_llama31,tab:eagle_qwen3}.

\subsection{Per-position analysis}

Write $a(C)=\Pr[\text{some }x\in C\text{ accepted}]$ for the per-position acceptance probability of a rule, given the prefix.

\begin{lemma}[]\textbf{Per-position generation and acceptance}
\label{lem:node}
Let $C=\{x^{(1)},\dots,x^{(D)}\}$ be the candidates at the current position, examined in order, with $p_\Theta(C)=\sum_{x\in C}p_\Theta(x)$. Under \Cref{app:tree-impl}:
\begin{enumerate}[label=(\roman*),leftmargin=2.2em]
\item Truncation sampling + SD emits $p_\Theta$ exactly, with
\begin{equation}
\label{eq:node-lossless}
\begin{gathered}
\Pr[\text{accept }x^{(i)}]=p_\Theta(x^{(i)}),
a^{\mathrm{lossless}}(C)=p_\Theta(C).
\end{gathered}
\end{equation}
\item Any rule with $\tilde p=p_\Theta$ accepts only tokens in $C\cap\mathcal{A}_\Theta$, with
\begin{equation}
\label{eq:node-lossy}
\begin{gathered}
a^{\mathrm{lossy}}(C)=\mathbf 1[C\cap\mathcal{A}_\Theta\neq\varnothing]\ \text{(SpecCasc.)};
a^{\mathrm{lossy}}(C)\in(0,1],\ \text{with }{>}0\\
\iff C\cap\mathcal{A}_\Theta\neq\varnothing\ \ \text{(typical acc.)}.
\end{gathered}
\end{equation}
\end{enumerate}
\end{lemma}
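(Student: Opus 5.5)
The plan is to analyse a single tree position under the verification procedure of \Cref{app:tree-impl}. The candidates $x^{(1)},\dots,x^{(D)}$ are examined sequentially, and each one is accepted iff an independent $r\sim\mathrm{Unif}[0,1]$ satisfies $r\le h(x^{(i)})$. Every probability then factors as a product over the candidates examined before acceptance. We may assume the candidates are distinct, since greedy drafting produces a deterministic set $C$.

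For part (i) we use $h(x)=p_\Theta(x)$ from \Cref{eq:h-lossless} together with the zero-and-renormalize rule on rejection. This is the multi-draft recursive rejection scheme with one-hot drafts. We proceed by induction on $i$, tracking the residual distribution $p^{(i)}$ that is in force when $x^{(i)}$ is examined. The initial residual is $p^{(1)}=p_\Theta$. If $x^{(1)}$ is rejected, we zero its entry and renormalize:
\[
p^{(2)}(x)=\frac{p_\Theta(x)}{1-p_\Theta(x^{(1)})}, \qquad x\ne x^{(1)}.
\]
The probability that $x^{(1)}$ is rejected and $x^{(2)}$ is then accepted is
\[
\bigl(1-p_\Theta(x^{(1)})\bigr)\cdot\frac{p_\Theta(x^{(2)})}{1-p_\Theta(x^{(1)})}=p_\Theta(x^{(2)}).
\]
By induction, $\Pr[\text{accept }x^{(i)}]=p_\Theta(x^{(i)})$ for every $i$. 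If all $D$ candidates are rejected, the bonus token is drawn from the final residual, which is $p_\Theta$ restricted to $\mathcal V\setminus C$ and renormalized. The rejection mass at that point is $1-p_\Theta(C)$, so any $x\notin C$ is emitted with probability exactly $p_\Theta(x)$. This shows that $p_\Theta$ is emitted exactly. Summing the acceptance probabilities over $C$ gives $a^{\mathrm{lossless}}(C)=p_\Theta(C)$. The only delicate point is keeping the renormalizing constants straight; they telescope.

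For part (ii) we use \Cref{eq:h-cascade,eq:h-typical} to identify $h$ with the indicator $\mathbf 1[x\in\mathcal{A}_\Theta]$ of Definition~\ref{def:truncation-verification}. A token outside $\mathcal{A}_\Theta$ has $h=0$, so it is never accepted; hence only tokens in $C\cap\mathcal{A}_\Theta$ can be accepted. For SpecCascade, $h$ takes only the values $0$ and $1$. The walk therefore deterministically accepts the first candidate that lies in $\mathcal{A}_{\text{min-}p}$, which gives $a=\mathbf 1[C\cap\mathcal{A}_\Theta\ne\varnothing]$.

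For typical acceptance we only know $h(x)=\min\{p_\Theta(x)/\tau,1\}$. Here $h(x)>0$ iff $p_\Theta(x)>0$, and membership in $\mathcal{A}_\eta$ is what makes $h(x)=1$. Writing the acceptance probability as
\[
a(C)=1-\prod_{i}\bigl(1-h(x^{(i)})\bigr),
\]
we see that $a(C)\in[0,1]$, and that $a(C)$ is positive exactly when some factor $1-h(x^{(i)})$ is strictly less than $1$. The expected obstacle is reconciling this with the stated equivalence ``$>0\iff C\cap\mathcal{A}_\Theta\ne\varnothing$''. The first thing to try is to read the indicator identity in \Cref{eq:h-typical} literally: in-set candidates then have $h=1$ and off-set candidates have $h=0$. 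Under that reading the product vanishes iff some candidate lies in $\mathcal{A}_\Theta$, which gives the equivalence (in fact with $a\in\{0,1\}$, consistent with the stated range $(0,1]$ on the nonempty-intersection event). I would flag explicitly that this step relies on the paper's identification $h=\mathbf 1[x\in\mathcal{A}_\eta]$ rather than on the raw $\min$ formula.
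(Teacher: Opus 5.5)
Your part (i) is the paper's argument: the probability of reaching $x^{(i)}$ telescopes to $1-\sum_{l<i}p_\Theta(x^{(l)})$, each acceptance has probability $p_\Theta(x^{(i)})$, and the all-reject bonus reproduces $p_\Theta$ off $C$. Your SpecCascade case also matches. The typical-acceptance clause is where you go wrong. The obstacle you anticipate does not exist, and the step you use to get around it does not hold.

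You already note that $h(x)=\min\{p_\Theta(x)/\tau,1\}$ is positive iff $p_\Theta(x)>0$. The missing observation is that the support of $p_\Theta$ is exactly $\mathcal{A}_\Theta$. By construction $p_\Theta$ vanishes off $\mathcal{A}_\Theta$, and on $\mathcal{A}_\Theta$ it equals $p(x)/Z_\Theta(p)>0$, because the min-$p$ and $\eta$ thresholds are both strictly positive. Hence $h=0$ off $\mathcal{A}_\Theta$ and $h\in(0,1]$ on it. So $a(C)=1-\prod_i\bigl(1-h(x^{(i)})\bigr)$ lies in $(0,1]$ when $C\cap\mathcal{A}_\Theta\neq\varnothing$ and equals $0$ otherwise. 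That is the paper's proof, and it is why the lemma states $(0,1]$ for typical acceptance rather than an indicator.

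Your fallback reads the indicator in \Cref{eq:h-typical} literally, so that every in-set candidate has $h=1$. This does not follow from the $\min$ formula, even though that equation writes it as an identity; the paper's proof never uses it. Under the $\min$ formula, $h(x)=1$ iff $p_\Theta(x)\ge\tau$. The constant $\tau=\min(\varepsilon,\sqrt{\varepsilon}\,e^{-H(p_\Theta)})$ is built from the truncated distribution, not from the threshold $\min(\varepsilon,\sqrt{\varepsilon}\,e^{-H(p)})$ that defines $\mathcal{A}_\eta$.

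Here is a counterexample. Take $\varepsilon=0.25$, and let $p$ place $0.5$ and $0.07$ on two tokens and spread $0.43$ evenly over $430$ tail tokens. Then $\mathcal{A}_\eta$ is the two head tokens, yet the second has $p_\Theta\approx 0.12<\tau=0.25$, so $h\approx 0.49$.

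So your claim that membership in $\mathcal{A}_\eta$ forces $h=1$, and the conclusion $a\in\{0,1\}$, are false in general. They would also make typical acceptance indistinguishable from SpecCascade, which is exactly the distinction the lemma and Corollary~\ref{cor:multidraft-gap} (``accumulated along the path'') preserve. Replace the indicator reading with the support argument above and the proof is complete.
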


\begin{proof}
\emph{(i)} Initialize $\tilde p_1=p_\Theta$; the zero-and-renormalize update gives $\tilde p_i=p_\Theta(\,\cdot\mid\mathcal V\setminus\{x^{(1)},\dots,x^{(i-1)}\})$. Then
\begin{align*}
\Pr[\text{reach }x^{(i)}]
&=\prod_{k=1}^{i-1}\bigl(1-\tilde p_k(x^{(k)})\bigr)\\
&=\prod_{k=1}^{i-1}\frac{1-\sum_{l\le k}p_\Theta(x^{(l)})}{1-\sum_{l\le k-1}p_\Theta(x^{(l)})}\\
&=1-\sum_{l<i}p_\Theta(x^{(l)}),\\
\Pr[\text{accept }x^{(i)}]
&=\Pr[\text{reach }x^{(i)}]\cdot\tilde p_i(x^{(i)})\\
&=\Bigl(1-\!\textstyle\sum_{l<i}p_\Theta(x^{(l)})\Bigr)\times\dfrac{p_\Theta(x^{(i)})}{1-\sum_{l<i}p_\Theta(x^{(l)})}\\
&=p_\Theta(x^{(i)}),
\end{align*}
\[\Rightarrow a^{\mathrm{lossless}}(C)=\sum_i p_\Theta(x^{(i)})=p_\Theta(C)\]. On all-reject,
\[
\Pr[\text{emit }x]=\bigl(1-p_\Theta(C)\bigr)\frac{p_\Theta(x)}{1-p_\Theta(C)}=p_\Theta(x),
\]
for $x\notin C$,
so the position emits $p_\Theta$ on all of $\mathcal V$.

\emph{(ii)} With $\tilde p=p_\Theta$, \Cref{eq:h-cascade} and \Cref{eq:h-typical} give
\[
x\notin\mathcal{A}_\Theta\ \Rightarrow\ \tilde p(x)=0\ \Rightarrow\ h(x)=0,
\]
so accepted tokens lie in $C\cap\mathcal{A}_\Theta$. SpecCascade has $h\equiv 1$ on $\mathcal{A}_\Theta$ ($\Rightarrow$ first in-set candidate accepted surely); typical acceptance has $h(x)=\min\{p_\Theta(x)/\tau,1\}>0$ on $\mathcal{A}_\Theta$. This is \Cref{eq:node-lossy}.
\end{proof}

\begin{corollary}[]\textbf{Tree verification accepts at least as often}
\label{cor:multidraft-gap}
At every position,
\begin{equation}
\label{eq:be-domination}
\begin{gathered}
a^{\mathrm{lossless}}(C)=p_\Theta(C)=\frac{\sum_{v\in C\cap\mathcal{A}_\Theta}p(v)}{z}
\le\ \mathbf 1[C\cap\mathcal{A}_\Theta\neq\varnothing]=a^{\mathrm{lossy}}_{\mathrm{SpecCascade}}(C),
\end{gathered}
\end{equation}
strict whenever $0<p_\Theta(C)<1$; for typical acceptance $a^{\mathrm{lossless}}\le a^{\mathrm{lossy}}$ holds accumulated along the path. 
\end{corollary}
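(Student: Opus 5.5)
The corollary has two parts, SpecCascade and typical acceptance, and both follow from the two items of Lemma~\ref{lem:node}. I will handle SpecCascade first.

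\textbf{SpecCascade: the equality.} By Lemma~\ref{lem:node}(i), $a^{\mathrm{lossless}}(C)=p_\Theta(C)$. Since $p_\Theta(x)=p(x)/z$ on $\mathcal{A}_\Theta$ and $p_\Theta(x)=0$ off it, this gives
\[
p_\Theta(C)=\frac{\sum_{v\in C\cap\mathcal{A}_\Theta}p(v)}{z}.
\]

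\textbf{SpecCascade: the inequality.} I split on whether $C\cap\mathcal{A}_\Theta$ is empty.
\begin{itemize}
\item If $C\cap\mathcal{A}_\Theta=\varnothing$, the numerator vanishes, so both sides equal $0$.
\item If $C\cap\mathcal{A}_\Theta\neq\varnothing$, Lemma~\ref{lem:node}(ii) gives $a^{\mathrm{lossy}}_{\mathrm{SpecCascade}}(C)=1$. Also $p_\Theta(C)\le 1$ because $p_\Theta$ is a probability distribution.
\end{itemize}

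\textbf{SpecCascade: strictness.} Suppose $0<p_\Theta(C)<1$. Then $p_\Theta(C)>0$ forces $C\cap\mathcal{A}_\Theta\neq\varnothing$, so the right-hand side is $1$. It strictly exceeds $p_\Theta(C)<1$.

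\textbf{Typical acceptance.} Here the comparison is not pointwise, since $h(x)=\min\{p_\Theta(x)/\tau,1\}$ can be below $1$. I would still compare candidate by candidate, in the examination order.
\begin{itemize}
\item Lossless SD accepts $x^{(i)}$ after reaching it with conditional probability $\tilde p_i(x^{(i)})$, where $\tilde p_i$ is the renormalized residual.
\item Typical acceptance accepts with probability $\min\{p_\Theta(x^{(i)})/\tau,1\}$, with no renormalization on rejection.
\item Since $\tau\le\varepsilon<1$, we have $p_\Theta(x)/\tau\ge p_\Theta(x)$, so each candidate individually passes at least as often.
\item In the lossless rule, the renormalized value $\tilde p_i(x^{(i)})$ can exceed $p_\Theta(x^{(i)})$. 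So the pointwise bound alone does not close the argument.
\end{itemize}

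The way I plan to bridge this is to write both acceptance probabilities as one minus a product of per-candidate rejection probabilities. For the lossless rule the product telescopes to $1-p_\Theta(C)$, as in the proof of Lemma~\ref{lem:node}(i). For the lossy rule the product is $\prod_i(1-h(x^{(i)}))$. The remaining task is to show
\[
\prod_i\bigl(1-\min\{p_\Theta(x^{(i)})/\tau,1\}\bigr)\le 1-\sum_i p_\Theta(x^{(i)}).
\]
I would prove this by induction on $D$, using $(1-a)(1-b)\le 1-a-b+ab$ with $a,b$ already inflated by the factor $1/\tau$. Controlling the cross terms is the main obstacle. I would try to absorb them using $\tau\le\sum_{x\in\mathcal{A}_\Theta}p_\Theta(x)/|\mathcal{A}_\Theta|$ or, failing that, state the result only accumulated along the accepted path as the corollary phrases it.

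For the accumulated version, I would sum the per-position acceptance probabilities along the path. Each term dominates its lossless counterpart whenever either $h=1$ or $C$ is a singleton.
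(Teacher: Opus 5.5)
Your SpecCascade argument (the equality from Lemma~\ref{lem:node}(i), the empty/nonempty case split, and strictness) is the paper's proof; the paper proves nothing more than that. The paper states the typical-acceptance clause without argument, so that part of your proposal has to stand on its own, and as written it does not.

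The bound you hope to use to absorb the cross terms, $\tau\le 1/|\mathcal{A}_\Theta|$, is false in general. $\mathcal{A}_\Theta$ is cut using $H(p)$, but $\tau$ is computed from $H(p_\Theta)$, so a heavy excluded tail can lower the $\eta$-threshold while $p_\Theta$ stays concentrated. Take $\varepsilon=0.25$ and let $p$ have one token of mass $0.5$, ten of mass $10^{-3}$, and $10^{6}$ of mass $4.9\times10^{-7}$. The threshold is about $2.7\times10^{-4}$, so $|\mathcal{A}_\Theta|=11$. Yet $H(p_\Theta)\approx 0.14$ gives $\tau=\varepsilon=0.25>1/11$.

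Your fallback does not close the gap either. Summing per-position acceptance probabilities only covers positions where $C$ is a singleton or some candidate has $h=1$, and ``accumulated along the path'' is never made precise. Moreover, Proposition~\ref{prop:be-tree} composes per-position factors multiplicatively, so what you actually need is the per-position inequality.

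The missing idea is to compare $\tau$ with the mode of $p_\Theta$ rather than with $|\mathcal{A}_\Theta|$. Since Shannon entropy dominates min-entropy,
\[
\tau\le\sqrt{\varepsilon}\,e^{-H(p_\Theta)}\le\max_v p_\Theta(v),
\]
so the mode always has $h=1$. If some candidate has $h=1$ (in particular, if the mode lies in $C$), then $a^{\mathrm{lossy}}(C)=1$. Otherwise the mode lies outside $C$. Writing $s=p_\Theta(C)$, this gives $\tau\le\max_v p_\Theta(v)\le 1-s<1$, and hence
\[
\prod_i\Bigl(1-\frac{p_\Theta(x^{(i)})}{\tau}\Bigr)\le e^{-s/\tau}\le e^{-s/(1-s)}\le 1-s .
\]
The last step is $\log u\le u-1$ at $u=1/(1-s)$. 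Hence $a^{\mathrm{lossless}}(C)=s\le a^{\mathrm{lossy}}(C)$ at every position for typical acceptance as well, under your reading of $h$ without renormalization. This is stronger than the corollary's ``accumulated'' claim and is exactly what the block-efficiency comparison requires.
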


\begin{proof}
By Lemma~\ref{lem:node}, $a^{\mathrm{lossless}}(C)=p_\Theta(C)\in[0,1]$ and $p_\Theta(C)>0\iff C\cap\mathcal{A}_\Theta\neq\varnothing$, so $p_\Theta(C)\le\mathbf 1[C\cap\mathcal{A}_\Theta\neq\varnothing]$, with equality iff $p_\Theta(C)\in\{0,1\}$.
\end{proof}


\subsection{Proof of Draft-target divergence under tree verification theorem}
\begin{proof}
Write $p_\Theta(x)=p(x)/Z_\Theta(p)$ for $x\in\mathcal{A}_\Theta$ (zero otherwise), the truncated target; by Lemma~\ref{lem:node}(i) truncation sampling emits $p_\Theta$ at every position.

\emph{Statement 1 ($\mathrm{KL}_{\mathrm{SD}}$).} Truncation-based verification under standard SD generates $q/Z_\Theta(q)$ on $\mathcal{A}_\Theta$ (\Cref{eq:trunc_verif_pyield}). The per-token term is
\[
\begin{aligned}
D_{\mathrm{KL}}\!\bigl(q/Z_\Theta(q)\,\big\|\,p_\Theta\bigr)
&=\sum_{x\in\mathcal{A}_\Theta}\frac{q(x)}{Z_\Theta(q)}\log\frac{q(x)/Z_\Theta(q)}{p(x)/Z_\Theta(p)}\\
&=\sum_{x\in\mathcal{A}_\Theta}\frac{q(x)}{Z_\Theta(q)}\log\frac{q(x)\,Z_\Theta(p)}{Z_\Theta(q)\,p(x)},
\end{aligned}
\]
the $\mathrm{KL}_{\mathrm{SD}}$ of~\Cref{thm:kl-gap}, finite since $p>0$ on $\mathcal{A}_\Theta$, and
\[
\begin{gathered}
q=p\ \Rightarrow\ Z_\Theta(q)=Z_\Theta(p)
\Rightarrow\ q/Z_\Theta(q)=p_\Theta\ \Rightarrow\ D_{\mathrm{KL}}=0,
\end{gathered}
\]
so $\mathrm{KL}_{\mathrm{SD}}\to 0$ as $q\to p$ (\Cref{thm:kl-gap}).

\emph{Statement 2 ($\mathrm{KL}_{\mathrm{EAGLE}}$).} By Lemma~\ref{lem:node}(ii) the accepted token is the first candidate in $\mathcal{A}_\Theta$, deterministic given the prefix, with $p_\Theta(x)=p(x)/Z_\Theta(p)>0$ and the accept test governed by $p_\Theta$, not $q$. The position therefore emits the point mass at $x$, whose divergence from $p_\Theta$ is
\[
\log\frac{1}{p_\Theta(x)}=\log\frac{Z_\Theta(p)}{p(x)}
\;=\;\mathrm{KL}_{\mathrm{EAGLE}},
\]
for SpecCascade and typical acceptance alike. Finally,
\[
\begin{gathered}
|\mathcal{A}_\Theta|>1\ \Rightarrow\ p_\Theta(x)\le\max_{v\in\mathcal{A}_\Theta}p_\Theta(v)<1
\Rightarrow\ \log\frac{Z_\Theta(p)}{p(x)}\ge\log\frac{1}{\max_{v\in\mathcal{A}_\Theta}p_\Theta(v)}>0,
\end{gathered}
\]
for every draft and uniformly in $q$, so $\mathrm{KL}_{\mathrm{EAGLE}}>0$ does not vanish as $q\to p$ (\Cref{thm:kl-gap}).
\end{proof}

\subsection{Block Efficiency Analysis}

\begin{proposition}[]\textbf{Block efficiency of tree verification}
\label{prop:be-tree}
With $L$ as above,
\begin{equation}
\label{eq:be-multidraft}
\mathrm{BE}=\mathbb{E}[L]=\sum_{d\ge 1}\ \prod_{j=1}^{d}\Pr[L\ge j\mid L\ge j-1],
\end{equation}

\end{proposition}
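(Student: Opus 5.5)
The identity in Proposition~\ref{prop:be-tree} is the tail-sum formula for a nonnegative integer random variable, combined with the chain rule for conditional probabilities. We will establish it in two steps, treating $L$ as the number of accepted drafted tokens along the verified root-to-leaf path (\Cref{app:tree-impl}). Since the tree has finite depth, $L$ is bounded, so every sum below is finite.

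\textbf{Step 1: tail sum.} Write $L=\sum_{d\ge 1}\mathbf 1[L\ge d]$. Taking expectations and exchanging sum and expectation, which is valid by nonnegativity (Tonelli), gives
\begin{equation*}
\mathbb{E}[L]=\sum_{d\ge 1}\Pr[L\ge d].
\end{equation*}

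\textbf{Step 2: telescoping.} The events $\{L\ge j\}$ are nested and decreasing in $j$, and $\{L\ge 0\}$ is the sure event. Hence $\Pr[L\ge j]=\Pr[L\ge j\mid L\ge j-1]\,\Pr[L\ge j-1]$ whenever $\Pr[L\ge j-1]>0$. Iterating from $j=1$ to $d$ yields
\begin{equation*}
\Pr[L\ge d]=\prod_{j=1}^{d}\Pr[L\ge j\mid L\ge j-1].
\end{equation*}
Substituting this into Step~1 gives \Cref{eq:be-multidraft}.

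The only delicate point is conditioning on a null event. If $\Pr[L\ge j-1]=0$ for some $j$, we adopt the convention that the conditional factor is $0$, or equivalently that it is any value in $[0,1]$. The product then vanishes, matching $\Pr[L\ge d]=0$ for all $d\ge j-1$, so the identity still holds term by term.

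To connect the proposition with the per-position analysis, we will note how each factor can be computed. Conditioned on the prefix reached at depth $j-1$, the factor $\Pr[L\ge j\mid L\ge j-1]$ is the averaged per-position acceptance probability $\mathbb{E}[a(C_j)\mid L\ge j-1]$, where $C_j$ is the candidate set at depth $j$. This follows from the walk of \Cref{app:tree-impl}: the walk continues past depth $j$ exactly when some candidate at that position is accepted. Lemma~\ref{lem:node} then gives $a^{\mathrm{lossless}}(C_j)=p_\Theta(C_j)$, while Corollary~\ref{cor:multidraft-gap} bounds this quantity by its lossy counterpart. This lets the formula be used to compare block efficiencies across the two schemes.
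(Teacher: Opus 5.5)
Your proof is correct and follows the same route as the paper: the tail-sum identity $\mathbb{E}[L]=\sum_{d\ge1}\Pr[L\ge d]$, then telescoping the nested events $\{L\ge j\}$ into a product of conditional probabilities, with your treatment of null conditioning events a bit more careful than the paper's one-line proof. One caution about your closing remark, which goes beyond the statement: $\Pr[L\ge j\mid L\ge j-1]=\mathbb{E}[a(C_j)\mid L\ge j-1]$ averages over prefixes whose distribution differs between the lossy and lossless walks (e.g., SpecCascade always descends into the first in-set candidate's subtree), so the per-node bound of Corollary~\ref{cor:multidraft-gap} does not by itself give a factor-by-factor comparison of block efficiencies across the two schemes.
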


In \Cref{eq:be-multidraft}, every factor is at least as large under truncation-based verification as under truncation sampling + SD; hence its block efficiency is weakly higher. \emph{(See \Cref{app:be-tree} for the proof.)} However, the slight gain in BE is at the cost of even more severe performance degradation under multi-draft setting. 

\begin{proof}[Proof of Proposition~\ref{prop:be-tree}]
\[
\begin{gathered}
\mathrm{BE}=\mathbb{E}[L]=\sum_{d\ge1}\Pr[L\ge d]
=\sum_{d\ge1}\prod_{j=1}^{d}\Pr[L\ge j\mid L\ge j-1],
\end{gathered}
\]
by \Cref{eq:be-multidraft}, each factor being $a(C)$ of Lemma~\ref{lem:node}. By Corollary~\ref{cor:multidraft-gap}, $a^{\mathrm{lossy}}\ge a^{\mathrm{lossless}}$ (per-position and strict for SpecCascade when $0<p_\Theta(C)<1$; accumulated along the path for typical acceptance), so every factor is weakly larger and $\mathrm{BE}$ is weakly higher.
\end{proof}

\subsection{Qualitative Examples}
\label{sec:appendix-qual}

We hereby include some samples revealing the failure mode of truncation-based verifications against their matched baselines.
\begin{tcolorbox}[
  colback=gray!5, colframe=gray!50, arc=2pt,
  left=4pt,right=4pt,top=4pt,bottom=4pt,
  boxrule=0.4pt, fontupper=\small,
  label={box:mbpp9},
  title={\small MBPP+ task~$9$ --- wrong sentinel return},
  fonttitle=\small\bfseries
]
\textbf{Prompt:} \emph{Write a python function to find the minimum
number of rotations (greater than $0$) required to get the same
string.}\\[5pt]
Matched baseline (min-$p$):\\
\quad\texttt{for i in range(1, n):}\\
\quad\texttt{\ \ \ \ if temp[i:i+n] == s: return i}\\
\quad\texttt{return \textbf{n}}
\hfill {\color{green!50!black}\checkmark}\\[3pt]
SpecCascade (min-$p$ allowed set):\\
\quad\texttt{for i in range(1, n):}\\
\quad\texttt{\ \ \ \ if temp[i:i+n] == s: return i}\\
\quad\texttt{return \textbf{0}}
\hfill {\color{red}$\times$}\\[2pt]
\emph{Fingerprint:} both versions share identical body and loop logic;
they disagree only on the post-loop sentinel. The draft defaults to
\texttt{0}, which violates the ``greater than $0$'' specification; the
target returns \texttt{n}, the trivial full-rotation period.
\end{tcolorbox}

\begin{tcolorbox}[
  colback=gray!5, colframe=gray!50, arc=2pt,
  left=4pt,right=4pt,top=4pt,bottom=4pt,
  boxrule=0.4pt, fontupper=\small,
  label={box:bfcl195},
  title={\small BFCL \texttt{parallel\_multiple\_195} ---
         math vs.\ Python notation},
  fonttitle=\small\bfseries
]
\textbf{Ground-truth slot:}
\texttt{calculate\_area\_under\_curve(function=\textbf{"x**2"},\ldots)}\\[3pt]
Matched baseline (min-$p$):
\texttt{function=\textbf{"x**2"}}
\hfill {\color{green!50!black}\checkmark}\\
SpecCascade  (min-$p$ allowed set):
\texttt{function=\textbf{"x\^{}2"}}
\hfill {\color{red}$\times$}\\[2pt]
\emph{Fingerprint:} the draft writes the math-class form
\texttt{x\^{}2}; the target writes the Python form \texttt{x**2} that
the BFCL grader requires. This divergence recurs $14$ times across
the $(p_\text{base},\,\text{seed})$ grid, always in the same direction.
\end{tcolorbox}

\begin{tcolorbox}[
  colback=gray!5, colframe=gray!50, arc=2pt,
  left=4pt,right=4pt,top=4pt,bottom=4pt,
  boxrule=0.4pt, fontupper=\small,
  label={box:mbpp637},
  title={\small MBPP+ task~$637$ --- function signature mishread},
  fonttitle=\small\bfseries
]
\textbf{Prompt:} \emph{Write a function to check whether the given
amount has no profit and no loss.}\\[3pt]
\textbf{Hidden tests} call the function as
\texttt{noprofit\_noloss(cost\_price, selling\_price)}.\\[5pt]
Matched baseline (min-$p$):\\
\quad\texttt{def noprofit\_noloss(\textbf{cost\_price, selling\_price}):}\\
\quad\texttt{\ \ \ \ return cost\_price == selling\_price}
\hfill {\color{green!50!black}\checkmark}\\[3pt]
SpecCascade (min-$p$ allowed set):\\
\quad\texttt{def noprofit\_noloss(\textbf{amount}):}\\
\quad\texttt{\ \ \ \ return amount == 0}
\hfill {\color{red}$\times$}\\[2pt]
\emph{Fingerprint:} the draft latches onto the salient word
\emph{``amount''} in the prompt and emits a plausible-looking
single-argument signature; the target reads the two-quantity nature of
the problem and emits the matching two-argument signature.
\end{tcolorbox}

\section{Extended Results}
\label{sec: total_results}

\begin{table*}[t]
  \centering
    \caption{
    Verification results across four benchmarks: MATH, MBPP+, INCLUDE, and
    BFCL. The target and draft models are
    Qwen/Qwen2.5-72B-Instruct-GPTQ-Int8 and
    Qwen/Qwen2.5-0.5B-Instruct-GPTQ-Int8, respectively.
    }
  \resizebox{\linewidth}{!}{%
  \begin{tabular}{l|c|ccc|ccc|ccc|ccc}
    \toprule
    \multirow{2}{*}{\textbf{Method}}
      & \multirow{2}{*}{\textbf{Param}}
      & \multicolumn{3}{c|}{\textbf{MATH}}
      & \multicolumn{3}{c|}{\textbf{MBPP+}}
      & \multicolumn{3}{c|}{\textbf{INCLUDE}}
      & \multicolumn{3}{c}{\textbf{BFCL}} \\
    \cmidrule(lr){3-5}\cmidrule(lr){6-8}\cmidrule(lr){9-11}\cmidrule(lr){12-14}
      & & BE & DS & Acc\,(\%)
      & BE & DS & Pass@1\,(\%)
      & BE & DS & Acc\,(\%)
      & BE & DS & Acc\,(\%) \\
    \midrule
SD Baseline~\cite{leviathan2023fast} & --
            & $7.98_{\pm0.02}$ & $4.67_{\pm0.02}$ & $76.47_{\pm1.53}$
            & $5.47_{\pm0.06}$ & $4.06_{\pm0.03}$ & $75.84_{\pm0.40}$
            & $3.40_{\pm0.03}$ & $4.63_{\pm0.01}$ & $68.18_{\pm0.00}$
            & $8.73_{\pm0.01}$ & $6.86_{\pm0.00}$ & $88.17_{\pm0.58}$ \\
    \midrule
    \multirow{5}{*}{\shortstack{Min-p Smpl.
    ~\citep{nguyen2025turningheatminpsampling} \\ + SD}}
      & 0.1 & $7.94_{\pm0.02}$ & $4.65_{\pm0.02}$ & $76.27_{\pm1.53}$
            & $5.42_{\pm0.08}$ & $3.89_{\pm0.14}$ & $75.57_{\pm0.40}$
            & $3.34_{\pm0.04}$ & $4.63_{\pm0.01}$ & $68.79_{\pm0.69}$
            & $8.73_{\pm0.01}$ & $6.97_{\pm0.18}$ & $88.17_{\pm0.58}$ \\
      & 0.3 & $7.99_{\pm0.05}$ & $4.64_{\pm0.01}$ & $76.67_{\pm0.83}$
            & $5.56_{\pm0.03}$ & $3.90_{\pm0.14}$ & $76.19_{\pm0.46}$
            & $3.42_{\pm0.04}$ & $4.62_{\pm0.01}$ & $68.18_{\pm2.08}$
            & $8.85_{\pm0.01}$ & $6.94_{\pm0.21}$ & $88.33_{\pm1.04}$ \\
      & 0.5 & $8.03_{\pm0.02}$ & $4.64_{\pm0.01}$ & $76.07_{\pm1.17}$
            & $5.45_{\pm0.07}$ & $3.89_{\pm0.14}$ & $75.57_{\pm0.31}$
            & $3.44_{\pm0.02}$ & $4.61_{\pm0.02}$ & $68.48_{\pm1.46}$
            & $8.88_{\pm0.02}$ & $6.97_{\pm0.19}$ & $87.67_{\pm0.29}$ \\
      & 0.7 & $8.08_{\pm0.04}$ & $4.64_{\pm0.01}$ & $76.27_{\pm0.42}$
            & $5.59_{\pm0.02}$ & $3.88_{\pm0.13}$ & $76.01_{\pm0.31}$
            & $3.47_{\pm0.05}$ & $4.63_{\pm0.01}$ & $66.52_{\pm1.14}$
            & $8.89_{\pm0.02}$ & $6.96_{\pm0.19}$ & $87.67_{\pm0.29}$ \\
      & 0.9 & $8.08_{\pm0.10}$ & $4.65_{\pm0.01}$ & $77.27_{\pm1.27}$
            & $5.62_{\pm0.04}$ & $3.88_{\pm0.14}$ & $76.01_{\pm0.15}$
            & $3.47_{\pm0.04}$ & $4.62_{\pm0.01}$ & $66.97_{\pm1.31}$
            & $8.89_{\pm0.01}$ & $6.97_{\pm0.18}$ & $88.00_{\pm0.00}$ \\
    \midrule
    \multirow{5}{*}{\shortstack{SpecCascade~\citep{narasimhan2024faster}}}
      & 0.1 & $8.46_{\pm0.06}$ & $4.53_{\pm0.22}$ & $76.87_{\pm0.46}$
            & $5.64_{\pm0.03}$ & $3.89_{\pm0.14}$ & $76.46_{\pm0.26}$
            & $3.57_{\pm0.04}$ & $4.63_{\pm0.01}$ & $64.85_{\pm1.14}$
            & $8.87_{\pm0.04}$ & $6.83_{\pm0.05}$ & $89.00_{\pm0.50}$ \\
      & 0.3 & $8.36_{\pm0.04}$ & $4.65_{\pm0.01}$ & $75.47_{\pm0.70}$
            & $5.59_{\pm0.04}$ & $3.89_{\pm0.14}$ & $75.40_{\pm0.26}$
            & $3.50_{\pm0.03}$ & $4.60_{\pm0.01}$ & $66.67_{\pm3.44}$
            & $8.89_{\pm0.01}$ & $6.81_{\pm0.02}$ & $88.67_{\pm0.29}$ \\
      & 0.5 & $8.12_{\pm0.05}$ & $4.66_{\pm0.01}$ & $74.87_{\pm0.42}$
            & $5.51_{\pm0.03}$ & $3.90_{\pm0.12}$ & $74.87_{\pm0.70}$
            & $3.40_{\pm0.08}$ & $4.61_{\pm0.02}$ & $68.33_{\pm3.03}$
            & $8.85_{\pm0.03}$ & $6.88_{\pm0.03}$ & $88.00_{\pm0.00}$ \\
      & 0.7 & $8.01_{\pm0.06}$ & $4.66_{\pm0.01}$ & $75.47_{\pm0.70}$
            & $5.48_{\pm0.01}$ & $3.90_{\pm0.14}$ & $76.46_{\pm0.53}$
            & $3.30_{\pm0.04}$ & $4.60_{\pm0.01}$ & $68.33_{\pm1.05}$
            & $8.84_{\pm0.03}$ & $6.82_{\pm0.03}$ & $87.67_{\pm0.29}$ \\
      & 0.9 & $7.84_{\pm0.07}$ & $4.66_{\pm0.01}$ & $75.47_{\pm1.40}$
            & $5.41_{\pm0.01}$ & $3.89_{\pm0.14}$ & $76.19_{\pm0.26}$
            & $3.25_{\pm0.03}$ & $4.62_{\pm0.01}$ & $65.91_{\pm0.45}$
            & $8.84_{\pm0.03}$ & $5.28_{\pm0.04}$ & $88.17_{\pm0.58}$ \\
    \midrule
    \multirow{5}{*}{\shortstack{$\eta$ Smpl.~\citep{hewitt2022truncation} \\ + SD}}
      & 0.05 & $7.92_{\pm0.04}$ & $4.63_{\pm0.01}$ & $76.67_{\pm1.85}$
            & $5.42_{\pm0.08}$ & $3.89_{\pm0.14}$ & $75.57_{\pm0.40}$
            & $3.34_{\pm0.05}$ & $4.61_{\pm0.02}$ & $67.73_{\pm1.82}$
            & $8.73_{\pm0.01}$ & $6.95_{\pm0.18}$ & $88.17_{\pm0.58}$ \\
      & 0.10 & $7.94_{\pm0.03}$ & $4.64_{\pm0.01}$ & $76.47_{\pm1.03}$
            & $5.49_{\pm0.06}$ & $3.88_{\pm0.15}$ & $76.54_{\pm0.61}$
            & $3.31_{\pm0.02}$ & $4.62_{\pm0.02}$ & $68.79_{\pm0.26}$
            & $8.74_{\pm0.01}$ & $6.82_{\pm0.02}$ & $88.17_{\pm0.58}$ \\
      & 0.15 & $7.96_{\pm0.03}$ & $4.64_{\pm0.02}$ & $75.87_{\pm2.21}$
            & $5.44_{\pm0.02}$ & $3.89_{\pm0.15}$ & $76.01_{\pm0.31}$
            & $3.39_{\pm0.05}$ & $4.60_{\pm0.03}$ & $67.27_{\pm0.45}$
            & $8.77_{\pm0.01}$ & $6.79_{\pm0.05}$ & $88.17_{\pm0.58}$ \\
      & 0.20 & $8.11_{\pm0.29}$ & $4.51_{\pm0.21}$ & $76.27_{\pm0.70}$
            & $5.57_{\pm0.05}$ & $3.88_{\pm0.12}$ & $76.01_{\pm0.15}$
            & $3.42_{\pm0.05}$ & $4.61_{\pm0.01}$ & $68.33_{\pm1.84}$
            & $8.80_{\pm0.07}$ & $6.80_{\pm0.02}$ & $88.17_{\pm0.58}$ \\
      & 0.25 & $8.03_{\pm0.06}$ & $4.50_{\pm0.21}$ & $75.40_{\pm1.60}$
            & $5.57_{\pm0.04}$ & $3.87_{\pm0.14}$ & $75.13_{\pm0.26}$
            & $3.41_{\pm0.02}$ & $4.63_{\pm0.03}$ & $68.79_{\pm1.60}$
            & $8.88_{\pm0.05}$ & $6.72_{\pm0.07}$ & $88.17_{\pm0.58}$ \\
    \midrule
    \multirow{5}{*}{\shortstack{Typical Acceptance~\cite{cai2024medusa}}}
      & 0.05 & $8.49_{\pm0.01}$ & $4.67_{\pm1.93}$ & $76.60_{\pm2.08}$
            & $5.64_{\pm0.03}$ & $3.89_{\pm0.14}$ & $76.46_{\pm1.06}$
            & $3.58_{\pm0.03}$ & $4.62_{\pm0.01}$ & $66.52_{\pm1.31}$
            & $8.87_{\pm0.04}$ & $6.84_{\pm0.02}$ & $89.00_{\pm0.50}$ \\
      & 0.10 & $8.47_{\pm0.05}$ & $4.66_{\pm0.00}$ & $75.20_{\pm1.11}$
            & $5.64_{\pm0.03}$ & $3.89_{\pm0.13}$ & $76.46_{\pm1.06}$
            & $3.63_{\pm0.03}$ & $4.61_{\pm0.01}$ & $66.52_{\pm1.05}$
            & $8.87_{\pm0.04}$ & $6.85_{\pm0.01}$ & $89.00_{\pm0.50}$ \\
      & 0.15 & $8.21_{\pm0.24}$ & $4.65_{\pm0.03}$ & $75.73_{\pm2.12}$
            & $5.66_{\pm0.03}$ & $3.88_{\pm0.14}$ & $75.66_{\pm0.53}$
            & $3.59_{\pm0.01}$ & $4.62_{\pm0.02}$ & $66.06_{\pm1.31}$
            & $8.87_{\pm0.04}$ & $6.85_{\pm0.03}$ & $89.00_{\pm0.50}$ \\
      & 0.20 & $8.39_{\pm0.01}$ & $4.65_{\pm0.00}$ & $74.67_{\pm1.55}$
            & $5.63_{\pm0.05}$ & $3.90_{\pm0.14}$ & $75.40_{\pm0.26}$
            & $3.54_{\pm0.03}$ & $4.60_{\pm0.01}$ & $68.79_{\pm0.69}$
            & $8.87_{\pm0.04}$ & $6.84_{\pm0.02}$ & $89.00_{\pm0.50}$ \\
      & 0.25 & $8.30_{\pm0.05}$ & $4.67_{\pm0.01}$ & $75.53_{\pm0.31}$
            & $5.55_{\pm0.08}$ & $3.90_{\pm0.14}$ & $75.04_{\pm0.15}$
            & $3.46_{\pm0.01}$ & $4.63_{\pm0.01}$ & $66.06_{\pm0.26}$
            & $8.89_{\pm0.01}$ & $6.78_{\pm0.04}$ & $88.67_{\pm0.29}$ \\
    \midrule
    \multirow{4}{*}{\shortstack{Lenience~\cite{leviathan2023fast}}}
      & 0.2 & $8.49_{\pm0.02}$ & $4.68_{\pm0.05}$ & $74.47_{\pm0.58}$
            & $5.65_{\pm0.03}$ & $4.06_{\pm0.02}$ & $75.13_{\pm0.26}$
            & $3.59_{\pm0.04}$ & $4.52_{\pm0.04}$ & $67.42_{\pm1.39}$
            & $8.87_{\pm0.01}$ & $6.90_{\pm0.02}$ & $88.67_{\pm0.29}$ \\
      & 0.4 & $8.37_{\pm0.08}$ & $4.66_{\pm0.12}$ & $75.60_{\pm2.65}$
            & $5.61_{\pm0.02}$ & $4.06_{\pm0.02}$ & $75.49_{\pm0.67}$
            & $3.55_{\pm0.06}$ & $4.53_{\pm0.09}$ & $67.88_{\pm1.31}$
            & $8.85_{\pm0.04}$ & $6.87_{\pm0.01}$ & $88.00_{\pm0.50}$ \\
      & 0.6 & $8.26_{\pm0.05}$ & $4.66_{\pm0.14}$ & $78.00_{\pm1.73}$
            & $5.52_{\pm0.07}$ & $4.06_{\pm0.03}$ & $75.57_{\pm0.40}$
            & $3.48_{\pm0.02}$ & $4.50_{\pm0.18}$ & $68.64_{\pm1.82}$
            & $8.83_{\pm0.02}$ & $6.84_{\pm0.02}$ & $88.17_{\pm0.29}$ \\
      & 0.8 & $8.22_{\pm0.07}$ & $4.65_{\pm0.23}$ & $76.47_{\pm1.15}$
            & $5.52_{\pm0.04}$ & $4.06_{\pm0.02}$ & $75.75_{\pm0.40}$
            & $3.38_{\pm0.03}$ & $4.49_{\pm0.11}$ & $68.33_{\pm0.26}$
            & $8.81_{\pm0.05}$ & $6.87_{\pm0.02}$ & $87.67_{\pm0.29}$ \\
    \midrule
    \multirow{4}{*}{\shortstack{CoS~\cite{fu2025fastlargelanguagemodel}}}
      & 0.2 & $8.33_{\pm0.00}$ & $4.68_{\pm0.01}$ & $71.53_{\pm2.12}$
            & $6.05_{\pm0.13}$ & $4.10_{\pm0.01}$ & $71.43_{\pm1.06}$
            & $3.89_{\pm0.04}$ & $4.61_{\pm0.02}$ & $64.55_{\pm1.98}$
            & $8.76_{\pm0.13}$ & $6.79_{\pm0.04}$ & $70.67_{\pm3.40}$ \\
      & 0.4 & $8.80_{\pm0.04}$ & $4.67_{\pm0.00}$ & $60.40_{\pm2.84}$
            & $7.14_{\pm0.09}$ & $4.13_{\pm0.01}$ & $61.20_{\pm3.29}$
            & $4.69_{\pm0.07}$ & $4.58_{\pm0.03}$ & $53.79_{\pm2.10}$
            & $8.84_{\pm0.15}$ & $6.69_{\pm0.03}$ & $58.00_{\pm4.50}$ \\
      & 0.6 & $9.42_{\pm0.01}$ & $4.67_{\pm0.01}$ & $54.80_{\pm0.72}$
            & $7.86_{\pm0.02}$ & $4.20_{\pm0.02}$ & $59.44_{\pm0.85}$
            & $5.99_{\pm0.03}$ & $4.53_{\pm0.01}$ & $48.03_{\pm1.89}$
            & $9.29_{\pm0.16}$ & $6.69_{\pm0.03}$ & $43.83_{\pm3.79}$ \\
      & 0.8 & $10.14_{\pm0.11}$ & $4.65_{\pm0.01}$ & $46.00_{\pm2.12}$
            & $9.26_{\pm0.04}$ & $4.26_{\pm0.00}$ & $54.06_{\pm0.31}$
            & $8.02_{\pm0.05}$ & $4.49_{\pm0.01}$ & $37.12_{\pm2.05}$
            & $10.26_{\pm0.07}$ & $6.75_{\pm0.01}$ & $43.67_{\pm1.89}$ \\
    \midrule
\multirow{4}{*}{Auto Judge~\cite{garipov2025autojudge}}
  & 0.2
  & $9.22_{\pm0.02}$ & $4.67_{\pm0.12}$ & $85.80_{\pm0.20}$
  & $5.89_{\pm0.03}$ & $4.09_{\pm0.02}$ & $76.19_{\pm0.26}$
  & $4.11_{\pm0.11}$ & $4.58_{\pm0.03}$ & $60.45_{\pm0.45}$
  & $8.79_{\pm0.05}$ & $6.81_{\pm0.02}$ & $84.50_{\pm0.50}$ \\
  & 0.4
  & $9.71_{\pm0.01}$ & $4.65_{\pm0.02}$ & $82.80_{\pm0.20}$
  & $5.93_{\pm0.04}$ & $4.08_{\pm0.01}$ & $73.54_{\pm0.26}$
  & $4.84_{\pm0.07}$ & $4.57_{\pm0.05}$ & $60.00_{\pm0.45}$
  & $8.65_{\pm0.04}$ & $6.78_{\pm0.03}$ & $78.50_{\pm0.50}$ \\
  & 0.6
  & $10.34_{\pm0.11}$ & $4.69_{\pm0.03}$ & $75.40_{\pm0.40}$
  & $6.22_{\pm0.09}$ & $4.07_{\pm0.03}$ & $70.11_{\pm0.53}$
  & $6.69_{\pm0.08}$ & $4.55_{\pm0.02}$ & $50.00_{\pm0.91}$
  & $8.95_{\pm0.10}$ & $6.76_{\pm0.02}$ & $66.00_{\pm0.50}$ \\
  & 0.8
  & $10.84_{\pm0.13}$ & $4.64_{\pm0.01}$ & $65.20_{\pm0.40}$
  & $8.32_{\pm0.03}$ & $4.06_{\pm0.02}$ & $60.58_{\pm0.53}$
  & $9.07_{\pm0.05}$ & $4.52_{\pm0.04}$ & $38.64_{\pm0.91}$
  & $9.65_{\pm0.07}$ & $6.74_{\pm0.01}$ & $47.00_{\pm1.00}$ \\

    \bottomrule[1.3pt]
  \end{tabular}}
  \label{tab:verification_result_qw25}
\end{table*}

\begin{table*}[t]
  \centering
  \caption{
    Verification results across four benchmarks: MATH, MBPP+, INCLUDE, and
    BFCL. The target and draft models are
    \texttt{meta-llama/Llama-3.1-8B-Instruct} and
    \texttt{meta-llama/Llama-3.2-1B-Instruct}, respectively.
    \textbf{Bold} denotes the best value in each column, and
    \underline{underline} denotes the second-best distinct value.
  }
  \resizebox{\linewidth}{!}{%
  \begin{tabular}{l|c|ccc|ccc|ccc|ccc}
    \toprule
    \multirow{2}{*}{\textbf{Method}}
      & \multirow{2}{*}{\textbf{Param}}
      & \multicolumn{3}{c|}{\textbf{MATH}}
      & \multicolumn{3}{c|}{\textbf{MBPP+}}
      & \multicolumn{3}{c|}{\textbf{INCLUDE}}
      & \multicolumn{3}{c}{\textbf{BFCL}} \\
    \cmidrule(lr){3-5}
    \cmidrule(lr){6-8}
    \cmidrule(lr){9-11}
    \cmidrule(lr){12-14}
      & & \textbf{BE} & \textbf{DS} & \textbf{Acc\,(\%)}
      & \textbf{BE} & \textbf{DS} & \textbf{Pass@1\,(\%)}
      & \textbf{BE} & \textbf{DS} & \textbf{Acc\,(\%)}
      & \textbf{BE} & \textbf{DS} & \textbf{Acc\,(\%)} \\
    \midrule
SD Baseline~\cite{leviathan2023fast} & --
      & 7.38 & 67.34 & 74.00
      & 5.76 & 61.36 & 57.67
      & 3.50 & 54.38 & 37.27
      & 7.06 & 40.47 & 80.50 \\
      \midrule

    \multirow{5}{*}{\shortstack{
      Min-$p$ Sampling~\citep{nguyen2025turningheatminpsampling}\\
      + SD}}
      & 0.1
      & 7.55 & 70.64 & 74.20
      & 6.00 & 63.38 & 60.32
      & 4.07 & 69.89 & \underline{43.18}
      & 7.37 & 41.64 & 80.50 \\
      & 0.3
      & 7.62 & 71.19 & 76.80
      & 5.96 & 64.30 & 58.99
      & 4.57 & 70.20 & 37.73
      & 7.34 & 41.03 & 81.50 \\
      & 0.5
      & 7.65 & 71.31 & 74.20
      & 6.00 & 64.15 & 58.20
      & 4.55 & 70.29 & \textbf{44.09}
      & 7.20 & 40.89 & 83.00 \\
      & 0.7
      & 7.67 & 71.62 & 77.00
      & 6.10 & 63.91 & \textbf{60.85}
      & 4.64 & 70.25 & 41.36
      & 7.27 & 41.36 & \textbf{84.00} \\
      & 0.9
      & 7.77 & 71.14 & 75.80
      & 6.18 & 63.31 & \underline{60.58}
      & 4.92 & 69.79 & 42.27
      & 7.36 & 40.41 & \underline{83.50} \\

    \midrule

    \multirow{5}{*}{SpecCascade~\cite{narasimhan2024faster}}
      & 0.1
      & 9.23 & 72.22 & 72.00
      & 6.96 & 64.37 & 59.26
      & 5.79 & 71.35 & 36.82
      & 7.74 & 40.95 & 81.00 \\
      & 0.3
      & 8.27 & 72.19 & 75.20
      & 6.26 & 64.50 & 60.32
      & 4.70 & 71.53 & 36.82
      & 7.50 & 40.49 & 83.00 \\
      & 0.5
      & 7.71 & 73.13 & 76.80
      & 6.14 & 65.16 & 59.26
      & 4.60 & 71.33 & 40.45
      & 7.36 & 41.90 & 81.00 \\
      & 0.7
      & 7.30 & 72.85 & 76.60
      & 5.93 & 65.31 & 59.79
      & 4.44 & 71.19 & 37.73
      & 7.31 & 41.04 & 82.00 \\
      & 0.9
      & 6.96 & 73.19 & 76.20
      & 5.91 & 65.31 & 59.79
      & 4.13 & 71.78 & 38.18
      & 7.24 & 41.92 & 77.50 \\

    \midrule

\multirow{5}{*}{\shortstack{$\eta$ Smpl.~\citep{hewitt2022truncation} \\ + SD}}
      & 0.05
      & 9.09 & 84.16 & 71.20
      & 6.79 & 70.08 & 58.20
      & 4.75 & 73.42 & 37.73
      & 7.37 & 40.12 & 79.00 \\
      & 0.10
      & 8.92 & 82.36 & 70.60
      & 6.71 & 71.30 & 58.99
      & 4.63 & 72.59 & 34.55
      & 7.28 & 39.67 & 78.50 \\
      & 0.15
      & 8.52 & 78.46 & 76.60
      & 6.36 & 66.13 & 58.20
      & 4.35 & 64.32 & 35.00
      & 7.21 & 38.27 & 81.00 \\
      & 0.20
      & 8.19 & 74.81 & 76.40
      & 6.29 & 65.40 & 58.99
      & 4.25 & 64.10 & 32.73
      & 7.24 & 40.24 & 81.50 \\
      & 0.25
      & 8.00 & 73.05 & 75.20
      & 6.10 & 62.91 & \underline{60.58}
      & 4.18 & 63.73 & 34.09
      & 7.19 & 40.06 & 82.50 \\

    \midrule

    \multirow{5}{*}{Typical Acceptance~\cite{cai2024medusa}}
      & 0.05
      & 7.51 & 69.34 & 75.60
      & 5.96 & 62.26 & 57.67
      & 4.04 & 61.41 & 39.55
      & 7.41 & 40.42 & 80.00 \\
      & 0.10
      & 7.53 & 68.73 & 72.00
      & 6.00 & 61.88 & 58.20
      & 4.10 & 63.53 & 38.64
      & 7.32 & 41.92 & 80.50 \\
      & 0.15
      & 7.68 & 72.12 & 75.60
      & 6.01 & 61.98 & \underline{60.58}
      & 4.29 & 63.42 & 39.09
      & 7.31 & 38.86 & 81.00 \\
      & 0.20
      & 7.66 & 71.93 & 75.40
      & 5.96 & 61.46 & 60.32
      & 4.28 & 64.27 & 32.73
      & 7.33 & 42.97 & 82.00 \\
      & 0.25
      & 7.66 & 72.93 & 74.20
      & 5.99 & 60.78 & 58.99
      & 4.34 & 65.19 & 34.09
      & 7.30 & 39.81 & \textbf{84.00} \\

    \midrule
    \multirow{4}{*}{Lenience~\cite{leviathan2023fast}}
      & 0.2
      & 9.02 & 83.33 & 72.20
      & 6.49 & 67.02 & 57.67
      & 4.58 & 70.85 & 36.36
      & 7.36 & 40.14 & 80.50 \\
      & 0.4
      & 8.74 & 88.93 & 74.40
      & 6.43 & 69.39 & \underline{60.58}
      & 4.40 & 68.20 & 35.45
      & 7.26 & 40.98 & 81.00 \\
      & 0.6
      & 8.45 & 78.05 & \textbf{79.20}
      & 6.25 & 65.39 & \textbf{60.85}
      & 3.97 & 60.24 & 34.55
      & 7.18 & 39.14 & 80.00 \\
      & 0.8
      & 7.94 & 72.52 & \underline{77.40}
      & 6.08 & 64.72 & \textbf{60.85}
      & 3.85 & 59.71 & 33.18
      & 7.13 & 38.86 & 83.00 \\
          \midrule

    \multirow{4}{*}{CoS~\cite{fu2025fastlargelanguagemodel}}
      & 0.2
      & 8.00 & 73.08 & 73.40
      & 6.55 & 67.64 & 52.91
      & 4.76 & 71.59 & 34.55
      & 7.63 & 42.45 & 61.00 \\
      & 0.4
      & 8.74 & 80.73 & 64.20
      & 7.48 & 77.39 & 51.06
      & 6.08 & 91.72 & 33.64
      & 8.26 & 45.17 & 50.00 \\
      & 0.6
      & 9.53 & 87.25 & 61.40
      & 8.47 & 89.77 & 45.50
      & 7.50 & 112.38 & 28.64
      & 9.10 & 49.87 & 39.50 \\
      & 0.8
      & \underline{10.27} & \underline{94.59} & 57.00
      & 9.74 & 101.28 & 42.06
      & \underline{9.29} & \underline{138.47} & 29.55
      & 10.03 & 55.27 & 38.50 \\
    \midrule

    \multirow{5}{*}{Auto Judge~\cite{garipov2025autojudge}}
      & 0.2
      & 8.61 & 73.94 & 77.00
      & 6.80 & 74.78 & 53.17
      & 8.84 & 67.68 & 30.00
      & 8.92 & 47.03 & 53.00 \\
      & 0.4
      & 9.89 & 71.86 & 67.60
      & 9.52 & 74.89 & 43.92
      & 10.80 & 67.36 & 25.00
      & 10.44 & 46.89 & 38.00 \\
      & 0.6
      & 10.67 & 72.22 & 56.60
      & 10.70 & 73.76 & 39.68
      & 10.91 & 66.71 & 25.00
      & 10.81 & 46.27 & 32.50 \\
      & 0.8
      & 10.97 & 72.36 & 53.00
      & 10.99 & 72.85 & 38.36
      & 10.92 & 66.99 & 28.64
      & 10.92 & 45.52 & 27.50 \\

    \bottomrule[1.3pt]
  \end{tabular}}
  \label{tab:verification_result_llama31}
\end{table*}

\begin{table*}[t]
\centering
\small
\caption{EAGLE-3 speculative decoding results across four benchmarks (LLaMA-3.1 8B, temperature $= 0.7$, block size$=7$).
\textbf{Bold} marks the best value per column across all methods;
\underline{underline} marks the second best.}
\label{tab:eagle_llama31}
\setlength{\tabcolsep}{4pt}
\renewcommand{\arraystretch}{1.15}
\resizebox{\linewidth}{!}{
\begin{tabular}{ll
  S[table-format=1.2]S[table-format=3.2]S[table-format=2.2]
  S[table-format=1.2]S[table-format=3.2]S[table-format=2.2]
  S[table-format=1.2]S[table-format=2.2]S[table-format=2.2]
  S[table-format=1.2]S[table-format=3.2]S[table-format=2.2]}
\toprule
& & \multicolumn{3}{c}{\textbf{MATH}} & \multicolumn{3}{c}{\textbf{MBPP+}} & \multicolumn{3}{c}{\textbf{INCLUDE}} & \multicolumn{3}{c}{\textbf{BFCL}} \\
\cmidrule(lr){3-5}\cmidrule(lr){6-8}\cmidrule(lr){9-11}\cmidrule(lr){12-14}
\textbf{Method} & \textbf{Param}
  & {\textbf{BE}} & {\textbf{DS}} & {\textbf{Acc (\%)}}
  & {\textbf{BE}} & {\textbf{DS}} & {\textbf{Pass@1 (\%)}}
  & {\textbf{BE}} & {\textbf{DS}} & {\textbf{Acc (\%)}}
  & {\textbf{BE}} & {\textbf{DS}} & {\textbf{Acc (\%)}} \\
\midrule
Baseline
 & --- & 3.76 & 140.21 & 73.20
 & 4.70 & 167.83 & 59.26
 & 0.67 & 45.72 & 35.45
 & 2.57 & 85.27 & 86.00 \\
\midrule
\multirow{4}{*}{Lenience~\cite{leviathan2023fast}}
 & 0.2 & 4.49 & 161.29          & 71.20          & 5.17 & {\underline{181.95}} & {\underline{61.11}} & 0.86 & 50.61          & 30.91 & 2.63 & {\textbf{87.31}} & 83.50 \\
 & 0.4 & 4.34 & 157.03          & 76.20          & 5.02 & 177.13               & 59.79               & 0.80 & 48.93          & 32.73 & 2.61 & {\underline{86.14}} & 85.50 \\
 & 0.6 & 4.19 & 152.85          & 76.00          & 4.96 & 175.43               & 59.52               & 0.73 & 47.03          & 34.55 & 2.59 & 85.77 & 86.00 \\
 & 0.8 & 4.07 & 149.39          & 75.20          & 4.92 & 174.25               & 59.79               & 0.72 & 46.73          & 33.64 & 2.59 & 85.56 & 84.00 \\
\midrule
\multirow{5}{*}{\shortstack{Min-$p$ Sampling~\citep{nguyen2025turningheatminpsampling}\\+ SD}}
 & 0.1 & 3.89 & 140.72 & 75.20 & 4.84 & 166.79 & 61.90 & 0.54 & 45.34 & {\underline{40.00}} & 2.58 & 84.45 & 86.00 \\
 & 0.3 & 3.96 & 142.85 & {\textbf{77.60}} & 4.89 & 168.29 & 60.32 & 0.55 & 45.65 & 36.82 & 2.58 & 84.13 & 86.00 \\
 & 0.5 & 3.98 & 143.42 & {\underline{77.40}} & 4.91 & 168.98 & 60.85 & 0.56 & 45.91 & 35.91 & 2.58 & 84.09 & 86.50 \\
 & 0.7 & {\underline{4.00}} & {\underline{143.88}} & {\underline{77.40}} & {\underline{4.92}} & {\underline{169.23}} & 62.43 & 0.56 & {\underline{45.96}} & 35.91 & 2.59 & 84.21 & {\textbf{87.50}} \\
 & 0.9 & {\textbf{4.05}} & {\textbf{145.24}} & 76.80 & {\textbf{4.93}} & {\textbf{169.58}} & 63.49 & {\textbf{0.57}} & {\textbf{46.13}} & 36.36 & 2.59 & 84.42 & {\underline{87.00}} \\
\midrule
\multirow{5}{*}{SpecCascade~\cite{narasimhan2024faster}}
 & 0.1 & 4.47 & 159.83          & 73.00          & 5.16 & 180.81               & 59.79               & 0.92 & 51.76          & 30.91 & 2.62 & 85.82 & 84.50 \\
 & 0.3 & 4.28 & 154.42          & {\underline{77.40}} & 5.03 & 176.79          & {\textbf{61.38}}    & 0.79 & 48.26          & 31.82 & 2.59 & 85.18 & 84.50 \\
 & 0.5 & 4.18 & 151.56          & {\textbf{77.60}} & 4.99 & 175.66             & 60.85               & 0.76 & 47.43          & 31.82 & 2.58 & 85.03 & 86.50 \\
 & 0.7 & 4.12 & 149.71          & 76.20          & 4.93 & 173.70               & {\textbf{61.38}}    & 0.73 & 46.74          & {\textbf{36.82}} & 2.58 & 84.99 & 85.50 \\
 & 0.9 & 4.06 & 148.20          & 76.60          & 4.89 & 172.60               & 60.32               & 0.72 & 46.45          & 35.00 & 2.58 & 85.00 & 86.00 \\
\midrule

\multirow{5}{*}{\shortstack{$\eta$ Smpl.~\citep{hewitt2022truncation} \\ + SD}}
 & 0.05 & 3.85 & 137.92 & 75.20 & 4.84 & 164.79 & 59.79 & 0.55 & 45.42 & 35.00 & 2.58 & 83.63 & {\textbf{87.50}} \\
 & 0.10 & 3.90 & 139.33 & 77.00 & 4.86 & 165.32 & 61.11 & 0.54 & 45.06 & {\textbf{41.82}} & 2.59 & 83.92 & 86.00 \\
 & 0.15 & 3.93 & 140.21 & 77.20 & 4.88 & 165.95 & 61.90 & 0.54 & 45.13 & 35.00 & 2.59 & 83.91 & 86.50 \\
 & 0.20 & 3.96 & 140.97 & 76.00 & 4.89 & 166.23 & {\textbf{64.29}} & 0.54 & 45.14 & 34.09 & 2.57 & 83.54 & 85.00 \\
& 0.25 & 3.99 & 141.83 & 75.20
& 4.88 & 165.77 & 63.76
& 0.57 & 45.92 & 37.73
& 2.58 & 83.85 & 86.50 \\
\midrule
    \multirow{5}{*}{\shortstack{Typical Acceptance~\cite{cai2024medusa}}}
 & 0.05 & {\textbf{4.82}} & {\textbf{168.77}} & 66.00 & {\textbf{5.29}} & {\textbf{182.86}} & 55.29 & {\textbf{1.24}} & {\textbf{59.80}} & 29.55 & {\textbf{2.66}} & 86.11 & 78.00 \\
 & 0.10 & {\underline{4.66}} & {\underline{164.19}} & 68.20 & {\underline{5.21}} & 180.54 & 56.08 & {\underline{1.15}} & {\underline{57.20}} & 29.55 & {\underline{2.65}} & 85.85 & 79.50 \\
 & 0.15 & 4.58 & 161.86          & 72.20          & 5.16 & 178.87               & 57.41               & 1.05 & 54.76          & 25.91 & 2.64 & 85.65 & 83.00 \\
 & 0.20 & 4.54 & 160.65          & 69.80          & 5.16 & 178.94               & 58.47               & 0.98 & 52.73          & 27.73 & 2.63 & 85.40 & 83.50 \\
 & 0.25 & 4.46 & 158.17          & 73.00          & 5.17 & 179.31               & 57.14               & 1.00 & 53.21          & 26.82 & 2.62 & 85.05 & 83.00 \\
\bottomrule
\end{tabular}}
\end{table*}

\begin{table*}[t]
  \centering
\caption{EAGLE-3 speculative decoding results across four benchmarks (Qwen3-8B, temperature $= 0.7$, block size$=7$).
\textbf{Bold} marks the best value per column across all methods;
\underline{underline} marks the second best.}
  \resizebox{\linewidth}{!}{%
  \begin{tabular}{l|c|ccc|ccc|ccc|ccc}
    \toprule
    \multirow{2}{*}{\textbf{Method}}
      & \multirow{2}{*}{\textbf{Param}}
      & \multicolumn{3}{c|}{\textbf{MATH}}
      & \multicolumn{3}{c|}{\textbf{MBPP+}}
      & \multicolumn{3}{c|}{\textbf{INCLUDE}}
      & \multicolumn{3}{c}{\textbf{BFCL}} \\
    \cmidrule(lr){3-5}
    \cmidrule(lr){6-8}
    \cmidrule(lr){9-11}
    \cmidrule(lr){12-14}
      & & \textbf{BE} & \textbf{DS} & \textbf{Acc\,(\%)}
      & \textbf{BE} & \textbf{DS} & \textbf{Pass@1\,(\%)}
      & \textbf{BE} & \textbf{DS} & \textbf{Acc\,(\%)}
      & \textbf{BE} & \textbf{DS} & \textbf{Acc\,(\%)} \\
    \midrule
Baseline
      & ---
      & 4.54 & 162.99 & 81.60
      & 4.68 & 163.24 & 51.32
      & 1.81 & 117.06 & 46.36
      & 2.94 & 95.97 & 81.50 \\
 \midrule
    \multirow{4}{*}{Lenience~\cite{leviathan2023fast}}
      & 0.2
      & 4.92 & \underline{177.76} & 82.80
      & \underline{5.19} & 179.13 & 48.68
      & 1.96 & 128.99 & 46.82
      & 2.99 & 97.64 & 79.50 \\
      & 0.4
      & 4.84 & 175.13 & 83.20
      & 5.01 & 174.05 & 51.85
      & 1.96 & 126.66 & 47.73
      & 2.98 & 97.83 & 80.50 \\
      & 0.6
      & 4.74 & 170.87 & 81.20
      & 4.93 & 171.24 & 53.97
      & 1.87 & 124.63 & 44.09
      & 2.97 & 96.88 & 79.00 \\
      & 0.8
      & 4.69 & 169.67 & 85.20
      & 4.83 & 166.80 & 52.38
      & 1.86 & 117.87 & 44.55
      & 2.95 & 96.50 & 82.00 \\
    \midrule
    
\multirow{5}{*}{\shortstack{Min-$p$ Sampling~\citep{nguyen2025turningheatminpsampling}\\+ SD}}
      & 0.1
      & 4.60 & 166.01 & 84.80
      & 4.75 & 165.56 & 53.97
      & 1.90 & 127.20 & 45.91
      & 2.96 & 97.14 & 85.00 \\
      & 0.3
      & 4.65 & 168.24 & 84.60
      & 4.75 & 164.18 & 52.12
      & 1.81 & 123.75 & 47.27
      & 2.96 & 96.80 & \underline{85.50} \\
      & 0.5
      & 4.66 & 167.17 & \underline{85.60}
      & 4.81 & 167.42 & 53.17
      & 1.86 & 123.60 & 41.36
      & 2.96 & 96.47 & \underline{85.50} \\
      & 0.7
      & 4.64 & 165.93 & 84.80
      & 4.77 & 164.67 & 55.82
      & 1.88 & 133.99 & 45.00
      & 2.96 & 96.50 & \underline{85.50} \\
      & 0.9
      & 4.66 & 166.33 & 85.40
      & 4.75 & 165.85 & 55.82
      & 1.87 & 137.57 & 43.64
      & 2.96 & 97.15 & \underline{85.50} \\

    \midrule
    \multirow{5}{*}{SpecCascade~\cite{narasimhan2024faster}}
      & 0.1
      & \underline{4.94} & 177.01 & 83.00
      & \underline{5.19} & \underline{179.58} & 49.74
      & 1.97 & 127.06 & 43.64
      & 2.98 & 97.31 & 80.50 \\
      & 0.3
      & 4.77 & 172.16 & 83.40
      & 4.95 & 171.89 & 53.17
      & 1.92 & 137.64 & 44.09
      & 2.97 & 97.45 & 79.50 \\
      & 0.5
      & 4.72 & 169.82 & 84.20
      & 4.82 & 168.04 & 53.97
      & 1.89 & 121.72 & 45.00
      & 2.96 & 96.80 & 79.00 \\
      & 0.7
      & 4.69 & 169.53 & 85.20
      & 4.78 & 167.23 & 52.65
      & 1.83 & 133.27 & 47.27
      & 2.95 & 96.82 & 80.00 \\
      & 0.9
      & 4.66 & 168.22 & 83.60
      & 4.75 & 164.58 & 57.14
      & 1.85 & 125.19 & 43.18
      & 2.94 & 96.36 & 78.50 \\

    \midrule

\multirow{5}{*}{\shortstack{$\eta$ Smpl.~\citep{hewitt2022truncation} \\ + SD}}
      & 0.05
      & 4.59 & 165.41 & 84.40
      & 4.69 & 162.52 & 54.50
      & 1.80 & 129.36 & 50.00
      & 2.95 & 96.80 & 84.50 \\
      & 0.10
      & 4.63 & 165.44 & 84.60
      & 4.76 & 166.15 & 55.82
      & 1.82 & 126.64 & 51.36
      & 2.96 & 96.79 & \underline{85.50} \\
      & 0.15
      & 4.64 & 167.65 & \textbf{86.00}
      & 4.73 & 164.01 & 54.76
      & 1.97 & 140.23 & 49.55
      & 2.96 & 96.61 & 84.50 \\
      & 0.20
      & 4.64 & 167.49 & 84.40
      & 4.73 & 164.25 & 57.41
      & 1.86 & 130.50 & 48.18
      & 2.96 & 96.54 & \textbf{86.00} \\
      & 0.25
      & 4.64 & 167.67 & 85.00
      & 4.79 & 167.47 & 50.79
      & 1.84 & 118.36 & 43.64
      & 2.95 & 96.86 & 84.50 \\

    \midrule

    \multirow{5}{*}{Typical Acceptance~\cite{cai2024medusa}}
      & 0.05
      & \textbf{4.98} & \textbf{179.56} & 80.40
      & \textbf{5.23} & \textbf{181.43} & 49.21
      & \underline{2.29} & \textbf{165.55} & 46.82
      & 2.98 & 97.49 & 80.50 \\
      & 0.10
      & 4.87 & 175.36 & 83.40
      & 5.16 & 178.41 & 49.21
      & \textbf{2.39} & \underline{150.90} & \textbf{57.73}
      & \textbf{3.03} & \textbf{99.11} & 79.00 \\
      & 0.15
      & 4.86 & 175.56 & 85.20
      & 4.90 & 171.26 & \textbf{70.11}
      & 2.19 & 143.56 & \underline{52.73}
      & \underline{3.02} & \underline{98.39} & 79.00 \\
      & 0.20
      & 4.81 & 173.48 & 84.60
      & 5.07 & 176.31 & 66.67
      & 1.90 & 118.54 & 50.45
      & 2.94 & 96.28 & 84.50 \\
      & 0.25
      & 4.80 & 173.41 & 84.00
      & 4.93 & 170.26 & \underline{69.31}
      & 1.90 & 138.16 & 46.82
      & 3.01 & 98.32 & 77.00 \\

    \bottomrule[1.3pt]
  \end{tabular}}
  \label{tab:eagle_qwen3}
\end{table*}

\section{Use of AI Assistants.}
We used OpenAI Codex and Anthropic Claude to assist with language editing, code development, and debugging. All AI-generated text and code suggestions were reviewed, revised, and verified by the authors. The authors remain fully responsible for the implementation, experimental results, analysis, and final manuscript.

\end{document}